\documentclass[10pt, conference, letterpaper]{IEEEtran}

\usepackage[
backgroundcolor = White,textwidth=\marginparwidth]{todonotes}

\usepackage[markup=underlined]{changes}

\usepackage[utf8]{inputenc} 
\usepackage[T1]{fontenc}    
\usepackage{url}            
\usepackage{booktabs}       
\usepackage{amsfonts}       
\usepackage{nicefrac}       
\usepackage{microtype}      

\usepackage{graphicx}
\usepackage{color}
\usepackage{rotating}
\usepackage{tabularx}
\usepackage{pdflscape}
\usepackage{amsmath,amsthm,amssymb}
\usepackage{algorithm,algorithmic}
\usepackage{bbm,dsfont}
\usepackage{caption,subcaption}
\usepackage{wrapfig}
\usepackage{cancel}
\usepackage{enumerate, cases}
\usepackage{thmtools,thm-restate}
\usepackage{mathtools}

\usepackage[none]{hyphenat}
\usepackage{multirow}
\usepackage{makecell}
\usepackage{setspace}

\setlength{\emergencystretch}{3em}
\allowdisplaybreaks

\usepackage{dblfloatfix}
\usepackage{array}
\usepackage{enumitem}

\allowdisplaybreaks

\DeclareMathOperator*{\argmax}{arg\,max}

\newcolumntype{C}[1]{>{\centering}m{#1}}
\newcommand{\EE}[1]{\mathbb{E}\left[#1\right]}

\newcommand{\Prob}[1]{\mathbb{P}\left\{#1\right\}}
\newcommand{\ceil}[1]{\left\lceil #1 \right\rceil}
\newcommand{\floor}[1]{\left\lfloor #1 \right\rfloor}
\newcommand{\Regret}{\mathcal{R}}
\newcommand{\A}{\mathcal{A}_{C}}
\newcommand{\R}{\mathbb{R}}

\newcommand{\one}[1]{\mathds{1}_{\left\{#1\right\}}}

\newcommand{\ONUM}{\mathcal{P}_{\small{\text{ONUM}}}}

\newcommand{\bmu}{\boldsymbol{\mu}}
\newcommand{\btheta}{\boldsymbol{\theta}}

\newcommand{\bx}{\boldsymbol{x}}

\theoremstyle{plain}
\newtheorem{thm}{Theorem}

\newtheorem{cor}{Corollary}

\newtheorem{defi}{Definition}

\begin{document}

	\title{
		Stochastic Network Utility Maximization with Unknown Utilities: Multi-Armed Bandits Approach
		}
	\author{
		Arun Verma and Manjesh K. Hanawal \\
		Industrial Engineering and Operations Research\\
		Indian Institute of Technology Bombay, India \\ 
		\texttt{\{v.arun, mhanawal\}@iitb.ac.in}
	}
	\maketitle
	
	\begin{abstract}
        In this paper, we study a novel Stochastic {\em Network Utility Maximization} (NUM) problem where the utilities of agents are unknown. The utility of each agent depends on the amount of resource it receives from a network operator/controller. The operator desires to do a   resource allocation that maximizes the expected total utility of the network. We consider threshold type utility functions where each agent gets non-zero utility if the amount of resource it receives is higher than a certain threshold. Otherwise, its utility is zero (hard real-time).  We pose this NUM setup with unknown utilities as a regret minimization problem. Our goal is to identify a policy that performs as `good' as an oracle policy that knows the utilities of agents. We model this problem setting as a bandit setting where feedback obtained in each round depends on the resource allocated to the agents. We propose algorithms for this novel setting using ideas from Multiple-Play Multi-Armed Bandits and Combinatorial Semi-Bandits. We show that the proposed algorithm is optimal when all agents have the same utility. We validate the performance guarantees of our proposed algorithms through numerical experiments.
	\end{abstract}
	
	\begin{IEEEkeywords}
	 	Network Utility Maximization, Multi-Armed Bandits, Combinatorial Semi-Bandits, Resource Allocation
	\end{IEEEkeywords}

	\IEEEpeerreviewmaketitle
	\bstctlcite{IEEEexample:BSTcontrol} 	

	\section{Introduction}
	\label{sec:introduction}

Network Utility Maximization (NUM) is an approach for resource allocation among multiple agents such that the total utility of all the agents (network utility) is maximized. In its simplest  form, NUM solves the following problem:
\begin{align*}
	\underset{\bx}{\text{maximize}} & \sum_{i=1}^{K}U_i(x_i)\\
	& \mbox{subject to}  &\hspace{-3cm}\sum_{i=1}^K x_i \leq C
\end{align*}
where $U_i(\cdot)$ denotes the utility of agent $i$, variable $\bx=(x_1,x_2,\ldots, x_K) \in \R_+^K$ denote the allocated resource vector, and $C \in \R_+$ is  amount of resource available. Utilities define the satisfaction level of the agents, which depend on the amount of resource they are allocated. A resource could be bandwidth, power, or rates they receive. Since the seminal work of Kelly \cite{ETT1997_ChargingAndRateControl}, there has been a tremendous amount of work on NUM and its extensions. NUM is used to model various resource allocation problems and improve network protocols based on its analysis. The nature of utility functions is vital in the analysis of the NUM problem and assumed to be known or can be constructed based on the agent behavior model and operator cost model. However, agent behavior models are often difficult to quantify. In this work, we study the NUM problem where the utilities of the agent are unknown and stochastic.

The earlier NUM problems considered deterministic settings. Significant progress has been made to extend the NUM setup to take into account the stochastic nature of the network and agent behavior \cite{ETT2008_StochasticNUM_YiChiang}. For both the static and stochastic networks, the works in the literature often assume that the utility functions are smooth concave functions and apply Karush-Kuhn-Tucker conditions to find the optimal allocation. However, if the utility functions are unknown, these methods are useful only once the utilities are learned. Many of the NUM variants with full knowledge of utilities aim to find an optimal policy that meets several constraints like stability, fairness, and resource \cite{INFOCOM2010_DelayBasedNUM_Neely,WiOpt2017_DRUM_EryilmazKoprulu,WiOpt2018_NUMHetrogeneous_SinhaModiano}. In this work, we only focus on resource constraint due to limited divisible resource (bandwidth, power, rate).

Since learning an arbitrary utility function is not always feasible,  we assume the utilities belong to a class of `threshold' type functions. Specifically, we assume that the utility of each agent is stochastic with some positive mean only when it is allocated a certain minimum resource.  We refer to the minimum resource required by an agent as its `threshold' and the mean utility it receives when it is allocated resource above the threshold as its `mean reward.' Thus the expected utility of each agent is defined by two parameters -- a threshold and a mean reward.  Such threshold type utilities correspond to hard resource requirements. For example, an agent can transmit and obtain a positive rate  (reward) only if its power or bandwidth allocation is above a certain amount.

In each round, the operator allocates a resource to each agent and observes the utilities the agent obtains. The goal of the operator is to allocate resource such that the expected network utility is maximized. We pose this problem as a Multi-Armed Bandit (MAB) problem where the operator corresponds to a learner, agents to arms, and utilities to rewards. The learner's goal is to learn a policy that minimizes the difference between the best achievable expected network utility with full knowledge of the agent utilities and that obtained by the learner under the same resource constraint with the estimated utilities of agents.

The reward structure in the MAB formulation of the NUM problem is different from the standard MAB problem. Hence one cannot directly apply the standard MAB algorithms to the NUM setting.  Unlike standard MAB setup where the reward depends on the arm played, in the NUM setup, the reward obtained in each round depends on the resource allocated by the learner. The learner observes the utility of an agent only when it allocates resource above its threshold. Otherwise, it gets no reward on the utilities of the agents. Further, in the NUM setup, the learner may observe utilities of more than one agent in each round depending on how many agents receive resource above their corresponding thresholds.

A good policy for the NUM setting needs to learn the expected utility for each agent, i.e., the thresholds, and mean rewards. We first consider the case where the threshold for each agent is the same and then consider the case where the thresholds could be different. For both cases, we develop a policy based on Thompson Sampling that achieves sub-linear regret. Our contributions can be summarized as follows:
\begin{itemize}
	\item In Section \ref{sec:problemSetting}, we give a novel model for Online Network Utility Maximization (ONUM) with unknown utilities.
	
	\item In Section \ref{sec:same_theta}, we study the symmetric case where the threshold is the same for all the agents. Using the concept of `allocation equivalent,' we develop an optimal algorithm named ONUM-ST by exploiting connection with Multiple-Play Multi-Armed Bandits to our setting.
	
	\item In Section \ref{ssec:different_theta}, we study a more general asymmetric case where the threshold for agents could be different. We develop an efficient algorithm named ONUM-DT by exploiting its connection to Combinatorial Semi-Bandits.
	
	\item We empirically validate the performance of our algorithms via experiments on synthetic problems in Section \ref{sec:experiments}.
\end{itemize}

\subsection{Related Work}
NUM has been an active area of research in the past two decades. Many of its variants are developed for resource allocation in networking. We refer the readers to \cite{JSAC2006_TutorialOnNUM_PalomarChinag},\cite{TAC2007_AlternateDistributed_PalomarChinag} for an informative tutorial and survey on this subject. In this Section, we discuss works that look into learning aspects in NUM.

NUM in a multi-agent network is studied with partially observable channel states in \cite{PEVA2013_NetworkUM_LiNeely}. The authors assume that the channel states are Markovian and exploit the memory in the channel to maximize a known concave function of time average reward using the framework of Restless Bandits.  Stochastic Multi-Armed Bandits (MAB) \cite{ML2002_FiniteTimeAnlaysis_Auer,Book2012_RegretAnlaysis_Bubeck} are  applied in distributed optimization in networks. In cognitive radio networks (CRNs) with multiple agents, the MAB setup is used to maximize network throughput in a distributed setting \cite{JSAC2011_DistributedAlgorithms_Anandakumar,TCNS2016_DistributedAlgorithms_Anandakumar,ALT2018_MultiplayerBandits_BessonKaufmann,INFOCOM2019_DistributedLearning_TibrewalPatchalaHanawal,WiOpt2019_DistributedAlgorithms_VermaHanawalVaze}. The fairness issues while maximizing the network utility using the MAB setting is studied in \cite{INFOCOM2019_CombinatorialSleeping_LiLiuJi}.

Our MAB formulation of NUM involves solving a combinatorial 0-1 knapsack problem. Bandits with Knapsacks studied in \cite{JACM18_badanidiyuru2018bandits} also require solving a knapsack problem in each round. However, in their model, resource gets consumed in every round, unlike ours. Also, in Bandits with Knapsacks, the resource allocation does not affect the reward observed. \cite{NIPS16_abernethy2016threshold,ICML18_jain2018firing} also assume some threshold model for rewards. However, in their model, an agent receives a reward only if the sampled reward from its associated distribution is above some threshold. Whereas in our setup, the threshold corresponds to the minimum resource required. Resource allocation with semi-bandits feedback   \cite{UAI14_lattimore2014optimal,NIPS15_lattimore2015linear,ALT18_dagan18a} study a related but less general setting where the reward is observed in each round irrespective of the amount of resource allocated. Whereas in our setting, it is not the case as the reward is zero if the minimum requirement of the resource is not satisfied.
The adaptive resource allocation problem is also studied in loss setting with censored feedback by \cite{NeurIPS19_verma2019censored}, where no loss values are observed from arms that receive more resource than their associated thresholds. In this work, we consider a reward setting, and our algorithms differ from that in \cite{NeurIPS19_verma2019censored} as \cite{NeurIPS19_verma2019censored} first estimate the threshold value associated with the arms and then estimate the mean losses of arms. Whereas our goal is to maximize total reward, and our algorithms jointly estimate both threshold and mean reward of the arm. 

Depending on the resource allocated in each round, we observe the reward from a subset of agents who get their minimum required resource. Such combinatorial aspects of arms play are widely studied as Combinatorial (Semi-)Bandits in \cite{NIPS15_combes2015combinatorial,ICML15_komiyama2015optimal,NIPS16_chen2016combinatorial, ICML18_wang2018thompson}. Though these works are not directly related to our setup,  we make explicit connections of our algorithms to the algorithms given in \cite{ICML15_komiyama2015optimal} and \cite{ICML18_wang2018thompson}.


	\section{Problem Setting}		
	\label{sec:problemSetting}

We consider an online version of the NUM problem where utilities of the agents are unknown, and the network operator aims to reach the optimal resource allocation via sequential allocations. Let $K$ denote the number of agents, and $C$ denotes the amount of divisible resource (bandwidth, power). The operator assigns a fraction of resource to each agent, and the utility of agents depends on the amount of resource they receive. Utility for agent $i \in [K]$ where $[K] := \{1, 2, \ldots, K\}$, is stochastic and drawn from a fixed distribution $\nu_i$ with support in $[0,1]$ and mean $\mu_i \in [0,1]$ in each round, provided it receives a certain minimum amount of resource, otherwise its utility is zero. For each $i \in [K]$, let $\theta_i \in [0,C]$ denote the minimum resource required for agent $i$ to obtain non-zero utility. Then, for each agent $i \in  [K]$ utility is parameterized as $(\theta_i, \mu_i)$ such that agent $i$ receives mean utility $\mu_i$ if it is allocated at least $\theta_i$ fraction of resource, otherwise its utility is zero. 

The resource allocated to the agents decides the reward observed by the operator. If the allocated resource is at least $\theta_i$ for agent $ i \in [K]$, the operator observes the realization of the utility obtained by the agent drawn from the distribution $\nu_i$. Otherwise, zero utility is obtained by the agent.

In the following, we assume that each $\nu_i, i\in [K]$ is a Bernoulli distribution with parameter $\mu_i$. It is a challenging setting as the operator can't know whether sufficient resource is allocated to an agent whenever the agent receives zero utility. Because with Bernoulli utility the agent $i \in [K]$ can receive zero utility even if it is allocated minimum required resource with probability $(1-\mu_i).$ Whereas this probability is very small (almost zero) if the utility distribution is continuous.

Following the terminology of Multi-Armed Bandits (MAB), henceforth we refer to agents as arms, operator as learner and utility as a reward. Let $\bx:=\{x_i: i\in [K]\}$, where $x_i \in [0,C]$, denotes the resource allocated to arm $i$. An allocation vector $\bx$ is said to be feasible if $\sum_{i=1}^K x_i \leq C$. The set of all feasible allocations is denoted as $\A$. For any $\bx \in \A$,  mean reward from arm $i$ is non-zero only if $x_i\geq \theta_i$.  The goal of the learner is to find a feasible resource allocation that maximizes the network utility. 

The available resource may be allocated to multiple arms in our setup. However, the reward from each arm may not be observed depending on the amount of resource allocated to them. Hence we have semi-bandit feedback in each round, and we refer to this setup as Online Network Utility Maximization (ONUM). The vectors $\bmu:=\{\mu_j\}_{i \in [K]}$ and $\btheta:=\{\theta_i\}_{ i\in [K]}$ are unknown and identify an instance of ONUM problem. Henceforth we identify an ONUM instance as $P=(\bmu,\btheta,C) \in [0,1]^{K}\times \R_+^K \times \R_+$ and denote collection of ONUM instances as $\ONUM$. For simplicity of discussion, we assume that arms are indexed according to their decreasing mean rewards, i.e., $\mu_1 \geq \mu_2, \ldots, \geq \mu_K$, but the algorithms are not aware of this ordering. We refer to the first $M$ arms as \emph{top-}$M$ arms. For instance $P \in \ONUM$, the optimal allocation can be computed as the following $0$-$1$ knapsack problem:
\vspace{-1mm}
\begin{equation*}
	\label{equ:networkUtility}
	\bx^\star  \in \argmax _{\bx \in \A} \sum_{i=1}^K\mu_i \one{x_i\ge \theta_i}.
\end{equation*}

The interaction between a learner and the environment that governs rewards for the arms is as follows: In the round $t$, the environment generates a reward vector $(Y_{t,1}, Y_{t,2},\ldots, Y_{t,K}) \in \{0,1\}^K$, where $Y_{t,i}$ denotes the true reward for arm $i$ in round $t$.  The sequence $(Y_{t,i})_{t\geq 1}$ is generated i.i.d. with the common mean $\EE{Y_{t,i}}=\mu_i$ for each $i \in [K]$. The learner selects a feasible allocation $\bx_t=\{x_{t,i}: i\in [K]\}$ and observes reward vector $Y_t^\prime=\{Y^\prime_{t,i}: i\in [K]\}$, where $Y_{t,i}^\prime=Y_{t,i}\one{x_{t,i}\ge \theta_i}$ and collects reward $r_t(\bx_t)=\sum_{i \in [K]}Y_{t,i}^\prime$. A policy of the learner is to select a feasible allocation in each round based on the observed reward such that the cumulative reward is maximized. The performance of a policy that makes allocation $\{\bx_t\}_{t\geq1}$ in round $t$ is measured in terms of expected (pseudo) cumulative regret for $T$ rounds given by
\vspace{-1mm}
\begin{equation*}
	\mathbb{E}[\Regret_T] = T\sum_{i=1}^K\mu_i \one{x^\star_i\ge \theta_i} - \EE{ \sum_{t=1}^T\sum_{i=1}^K Y_{t,i} \one{x_{t,i}\ge \theta_i}}.
\end{equation*}
A good policy must have sub-linear cumulative regret, i.e., $\EE{\Regret_T}/T \rightarrow 0$ as $T \rightarrow \infty$.

\subsection{Allocation Equivalent}
Next, we define the notion of treating a pair of thresholds for the given loss vector and resource to be `equivalent.'
\begin{defi}[Allocation Equivalent]
	For any reward vector $\bmu$ and fix amount of resource $C$, two threshold vectors $\btheta$ and $\hat{\btheta}$ are {\em allocation equivalent} iff the following holds:
	\begin{equation*}
		\max_{\bx \in \A} \sum_{i=1}^K\mu_i \one{x_i\ge \theta_i}  = \max_{\bx \in \A} \sum_{i=1}^K\mu_i \one{x_i\ge \hat{\theta}_i}.
	\end{equation*}
\end{defi}

Such equivalence allows us to find the threshold vector within fix error tolerance, which has the same total mean reward reduction as a true threshold vector has.

	\section{Same Threshold for All Arms}
	\label{sec:same_theta}

We first focus on the special case of the online network utility maximization problem where $\theta_i=\theta_s$ for all $i \in [K]$. With abuse of notation, we denote an instance of ONUM with the same threshold as $(\bmu, \theta_s, C)$ where $\theta_s \in [0, C]$ is the value of the same threshold. Note that even though the threshold is the same, the mean rewards can be different across the arms. 
\noindent
Though $\theta_s$ can be any value in the interval $[0, C]$, an allocation equivalent to it can be restricted to a finite set. Our next result shows that the search for an allocation equivalent can be confined to a set consisting of $K$ elements.
\begin{restatable}{lem}{ThetaSet}
	\label{lem:thetaSet}
	Let $\theta_s \in [0,C]$, $M=\min\{\lfloor C/\theta_s \rfloor,$ $K\}$ and $\hat{\theta}_s=C/M$. Then  $\theta_s$ and $\hat{\theta}_s$ are allocation equivalent. Further, $\hat{\theta}_s \in \Theta$ where $\Theta = \{ C/K, C/(K-1), \cdots, C\}$. 
\end{restatable}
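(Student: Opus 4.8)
The plan is to reduce the knapsack problem to a cardinality-constrained selection and then compare the optimal values under the two thresholds. First I would observe that when every arm shares the same threshold $\theta_s$, the objective $\sum_{i} \mu_i \one{x_i \ge \theta_s}$ depends on a feasible allocation $\bx \in \A$ only through the set of arms that receive at least $\theta_s$. Allocating strictly more than $\theta_s$ to any arm never increases the objective and only consumes resource, so an optimal allocation can be taken to give each selected arm exactly $\theta_s$ and nothing to the rest. Hence activating $m$ arms is feasible iff $m\theta_s \le C$, i.e. iff $m \le \lfloor C/\theta_s \rfloor$ (interpreting $\lfloor C/\theta_s \rfloor$ as $K$ when $\theta_s = 0$), together with the trivial bound $m \le K$.

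Next I would determine the maximal achievable objective. Since the number of activatable arms is capped at $M = \min\{\lfloor C/\theta_s \rfloor, K\}$ and the arms are indexed in decreasing order of mean reward, the best choice is to activate the top-$M$ arms, so
\begin{equation*}
	\max_{\bx \in \A} \sum_{i=1}^K \mu_i \one{x_i \ge \theta_s} = \sum_{i=1}^M \mu_i .
\end{equation*}
I would then repeat the identical computation for $\hat{\theta}_s = C/M$: the cap on the number of activatable arms becomes $\min\{\lfloor C/\hat{\theta}_s \rfloor, K\} = \min\{\lfloor M \rfloor, K\} = \min\{M, K\} = M$, where the last equality uses $M \le K$. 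Thus the optimal value under $\hat{\theta}_s$ is again $\sum_{i=1}^M \mu_i$, which matches the value under $\theta_s$; by the definition of allocation equivalent this establishes the first claim.

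For the membership statement, I would argue that $M$ is an integer with $1 \le M \le K$. The upper bound is immediate from $M = \min\{\cdot, K\}$. For the lower bound, when $\theta_s > 0$ the hypothesis $\theta_s \le C$ gives $C/\theta_s \ge 1$ and hence $\lfloor C/\theta_s \rfloor \ge 1$, while the case $\theta_s = 0$ gives $M = K \ge 1$; either way $M \ge 1$. Consequently $\hat{\theta}_s = C/M$ with $M \in \{1, 2, \dots, K\}$, so $\hat{\theta}_s \in \{C/K, C/(K-1), \dots, C\} = \Theta$.

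The step I expect to require the most care is the reduction in the first paragraph, namely the claim that the knapsack optimum equals the top-$M$ sum. This rests on two facts that must be stated cleanly: that over-allocation is never helpful (so the continuous constraint $\sum_i x_i \le C$ collapses to the integer cardinality bound $m \le \lfloor C/\theta_s \rfloor$), and that, given the sorted indexing, greedily taking the $M$ largest means is optimal for a cardinality-constrained sum. The remaining arithmetic, including the identity $\lfloor C/(C/M) \rfloor = M$ and the boundary behaviour at $\theta_s = 0$, is routine once this reduction is in place.
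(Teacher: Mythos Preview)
Your proposal is correct and follows essentially the same approach as the paper: both arguments reduce the continuous allocation problem to selecting at most $M$ arms and observe that the optimal value equals $\sum_{i=1}^{M}\mu_i$ under either threshold, with the membership $\hat{\theta}_s\in\Theta$ following from $M\in\{1,\dots,K\}$. Your version is in fact more explicit than the paper's (you verify $\lfloor C/(C/M)\rfloor=M$ and handle the boundary $\theta_s=0$ directly, whereas the paper instead notes $\hat{\theta}_s\ge\theta_s$ and appeals more informally to the top-$M$ structure), but the underlying idea is the same.
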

\begin{proof}
	The proof is a straight forward adaption of Lemma 1 in \cite{NeurIPS19_verma2019censored} by allowing the threshold to be any value in $[0,C]$ where $C$ can be greater than $1$. The case when $\floor{C/\theta_s} \geq K$ is trivial. Let consider the case when $\floor{C/\theta_s} < K$. Using the definition of $M$, we have $M \le C/\theta_s$ and $\theta_s \le C/M \doteq \hat{\theta}_s$. Hence $\hat{\theta}_s \ge \theta_s$. Therefore allocation of $\hat{\theta}_s$ or $\theta_s$ fraction of resource allocation to an arm achieves the same mean reward. Further, for both instances $(\bmu, \theta_s, C)$ and $(\bmu, \hat{\theta}_s, C)$, the optimal allocations collect reward from the top-$M$ arms and no reward from the remaining arms. Hence the mean reward collected from the optimal allocations in both the instances results in the same total mean reward. It completes the proof of first part of lemma. Since all arms have same threshold, learner has to equally distribute resource among selected top-$M$ arms. As $M \in \{1, \ldots, K\}$ and $\hat{\theta}_s \le C$, the desired value of $\hat{\theta}_s$ is the one of element in set $\Theta = \{ C/K, C/(K-1), \cdots, C \}$. 
\end{proof}

Once the threshold is known, the optimal allocation of a learner is to allocate $\hat{\theta}_s$ amount of resource to each of the top-$M$ arms where $M=C/\hat{\theta}_s$. Lemma \ref{lem:thetaSet} shows that an allocation equivalent $\hat{\theta}_s$ for any instance $(\bmu,\theta_s,C)$ is one of value in a finite set $\Theta$. Once allocation equivalent is known, the problem reduces to identifying the top-$M$ arms and then allocating $\hat{\theta}_s$ amount of resource to each one of them to maximize the total mean reward. The latter part is equivalent to solving Multiple-Play Multi-Armed Bandits, as discussed next.

\subsection{Multiple-Play Multi-Armed Bandits (MP-MAB)}
In stochastic Multiple-Play Multi-Armed Bandits, a learner can play a subset of arms in each round. The selected subset of arms is of fixed size (known) and also known as superarm \cite{TAC1987_MultiPlayBandits_Anatharam}. The mean reward of a superarm is the sum of the mean reward of its constituent arms. In every round, a learner selects a superarm and observes the reward from each selected arm (semi-bandit feedback). The goal of the learner is to select a superarm that has the maximum mean reward. In MP-MAB, a policy selects a superarm in each round based on the previous reward information. The performance of any policy is measured in terms of regret. The regret is the difference between cumulative reward collected by playing optimal superarm and that collected by the policy in each round.

\noindent
{\bf Lower bound}: Due to the equivalence between the MP-MAB and ONUM problem with the (known) same threshold, the lower bound for MP-MAB is also a lower bound for the ONUM problem with the same threshold. Therefore, the following lower bound given for a strongly consistent algorithm by Theorem 3.1 in \cite{TAC1987_MultiPlayBandits_Anatharam} is also a lower bound on the ONUM problem with known same threshold:
\begin{equation}
	\label{eqn:LowerBound}
	\lim_{T\rightarrow \infty}\frac{\mathbb{E}[\Regret_T]}{\log T} \geq \sum_{i \in [K]\setminus[M]} \frac{\mu_M - \mu_i}{d(\mu_i, \mu_M)}
\end{equation}
where $d(p,q)$ is the Kullback-Leibler ($KL$) divergence between two Bernoulli distributions with parameter $p$ and $q$.

Once the threshold is known, any algorithm that works well for the MP-MAB also works well for the ONUM. Hence one can apply algorithms like ESCB \cite{NIPS15_combes2015combinatorial} and MP-TS \cite{ICML15_komiyama2015optimal} once an allocation equivalent is found for $\theta_s$. MP-TS uses Thompson Sampling, whereas ESCB uses UCB  and kl-UCB type indices.
We can adapt any of these algorithms for our setting. But we use MP-TS to our as it gives better empirical performance compare to ESCB and has been shown to obtain optimal regret bound for Bernoulli reward distributions.

\subsection{Algorithm ONUM-ST} 
We develop an algorithm named Online Network Utility Maximization with the Same Threshold (ONUM-ST). It exploits the result of Lemma \ref{lem:thetaSet} to learn an allocation equivalent of threshold and adapts MP-TS to minimize the regret. The pseudo-code of ONUM-ST is given in Algorithm \ref{alg:ONUM-ST}. ONUM-ST works as follows: it takes $K,C,\delta$ and $\epsilon$ as input where $\delta$ is the confidence on the correctness of estimated allocation equivalent and $\epsilon$ is such that $\mu_K \ge \epsilon > 0$. We set $\Theta=\{C/K, C/(K-1), \dots, C\}$ as in Lemma \ref{lem:thetaSet}. The elements of $\Theta$ are in increasing order, and each element is a candidate for allocation equivalent of  $\theta_s$ (line $2$). We also set the prior distribution for the mean reward of each arm as the Beta distribution $\beta(1, 1)$. For each arm $i \in [K], S_i$ represents the number of rounds when the reward is $1$, and $F_i$ represents the number of rounds when the reward is $0$ whenever the arm $i$ receives resource above its threshold.

ONUM-ST finds a threshold $\hat{\theta}_s$ that is an allocation equivalent to $\theta_s$ with high probability (at least $1-\delta$) using binary search over the set $\Theta$. The search begins by taking $\hat{\theta}_s$ to be the middle element in $\Theta$ (line 5). Let $S_i(t)$ and $F_i(t)$ denote the values of $S_i$ and $F_i$ in the starting of the round $t$. In round $t$, a sample $\hat\mu_i$ is drawn from $\beta(S_i(t), F_i(t))$ for each arm $i \in [K]$ independent of other arms (line $5$).  $\hat\mu_i$ values are ranked as per their decreasing values and each of the top-($C/\hat\theta_s$)  (denoted as set $A_t$ in line $6$) arms are allocated $\hat\theta_s$ amount of resource and their rewards are observed (line $7$). After knowing allocation equivalent of threshold, only $S_i$ and $F_i$ are updated for each arm $i \in A_t$ (line $17$).

Before knowing allocation equivalent (line $8$), if a reward $1$ is observed at any of the arms in the set $A_t$ (line $9$), it implies that the current value of $\hat{\theta}_s$ is possibly an overestimate of $\hat{\theta}_s$. So all candidates larger than $\hat{\theta}_s$ in set $\Theta$ are removed, and the search is repeated in the remaining half of the elements by starting with the middle element (line $10$). The success and failure counts are also updated as $S_i = S_i + Y_{t,i}, F_i = F_i + 1-Y_{t,i}+Z_i$ for each arm $i \in A_t$, and for all $k \in K \setminus A_t$ only failure count is updated as $F_k=F_k+Z_k$ (line $11$). The variable $Z_i, ~\forall i\in [K]$ keeps track of how many times $0$ is observed for arm $i$ before $1$ is observed on it when it is allocated resource. It is reset to zero once a reward $1$ is observed for any arm in set $A_t$. Variable $Z_i, i \in [K]$ allow us to distinguish the zeros observed when the arm receives over and under allocation of resource.
\vspace{-1mm}
\begin{algorithm}[H] 
	\caption{\bf ONUM-ST}
	\label{alg:ONUM-ST}
	\begin{algorithmic}[1]
		\STATE \textbf{Input:} $K, C, \delta, \epsilon$
		\STATE Initialize $\Theta$ as in Lemma \ref{lem:thetaSet}, $W_c=0, l=0, u = K, j = \ceil{u/2}, \forall i \in [K]: S_i=1, F_i=1, Z_i=0$ 
		\STATE $W_\delta = {\log(\log_2(K)/\delta)}/({\log(1/(1-\epsilon))})$ 
		\FOR{$t=1,2,\ldots,$}
			\STATE Set $\hat{\theta}_s = \Theta[j]$ and $\forall i \in [K]: \hat{\mu}_i \leftarrow \beta(S_i, F_i)$
			\STATE $A_t \leftarrow$ set of top-$({C}/{\hat{\theta}_s})$ arms from estimates $(\hat{\mu}_i)$
			\STATE $\forall i \in A_t:$ allocate $\hat{\theta}_s$ resource and observe $Y_{t,i}$
			\IF{$j \ne u$} 
				\IF{$Y_{t,a} = 1$ for any $a \in A_t$} 
					\STATE Set $u=j, j = u - \floor{(u - l)/2}, W_c=0$
					\STATE $\forall i \in A_t:$ set $S_i = S_i+Y_{t,i}, F_i = F_i+1-Y_{t,i} + Z_i$, $\forall k \in [K]\setminus A_t: F_k = F_k + Z_k$, $\forall i \in [K]: Z_i =0$
				\ELSE
					\STATE Set $W_c = W_c + 1$, and $\forall i \in A_t, Z_i = Z_i + 1$
					\STATE If $W_c = W_\delta$ then set $l=j, j = l + \ceil{(u - l)/2}$, $W_c=0, \forall i \in [K]: Z_i =0$
				\ENDIF
			\ELSE
				\STATE $\forall i \in A_t: S_i = S_i + Y_{t,i}, F_i = F_i + 1 - Y_{t,i}$
			\ENDIF
		\ENDFOR
	\end{algorithmic}
\end{algorithm}
\vspace{-2.25mm}
If reward $0$ is observed for all arms in the set $A_t$, $Z_i$ is incremented by $1$ for each arm $i \in A_t$ and variable $W_c$ is incremented by $1$ (line $13$). Variable $W_c$ counts the number of rounds for which reward is not observed on all the arms that are allocated resource.  If $W_c$ equals $W_\delta$, then with high probability $\hat{\theta}_i$ is possibly an underestimate of allocation equivalent. So all candidates smaller than the current value of $\hat{\theta}_s$ in set $\Theta$ are removed, and the search is repeated, starting with the middle element in the remaining half. $W_c$ as well as $Z_i,\; \forall i \in [K]$ are reset to $0$ (lines $14$). Resetting $Z_i$ values to zero once the number of zeros observed reaches $W_\delta$ ensures that they do not add to $F_i$ values when the resource is over-allocated. 

Since $\Theta$ has a finite size, the search for an allocation equivalent of  $\hat\theta$  terminates in the finite number of rounds with high probability. Once this happens, the algorithm allocates a resource to $C/\hat\theta_s$ arms (from Lemma \ref{lem:thetaSet}) in the subsequent rounds and observes their reward samples, i.e., a fixed number of arms are played (multiple-play) in each round. Also, the  $(C/\hat\theta_s)$ arms selected corresponds to top arms with the highest estimated means (line $7$), which are generated based on Thompson Sampling. Hence after finding allocation equivalent of $\theta_s$, our algorithm is the same as MP-TS. We leverage this observation to adapt the regret bounds of MP-TS.

\subsection{Analysis of ONUM-ST}
\label{ssec:analysisNUM_S}
When $\hat{\theta}_s$ is an overestimate, and no reward is observed for consecutive $W_\delta$ rounds, then $\hat{\theta}_s$ will be increased. Such increment leads to an incorrect estimate of $\hat{\theta}_s$. Hence, the value of $W_\delta$ is set such that the probability of having the wrong allocation equivalent is upper bounded by $\delta$. Let $T_{\theta_s}$ denote number of rounds needed to find an allocation equivalent of $\theta_s$ in $\Theta$. Our next result gives a high probability bound on it.

\begin{restatable}{lem}{sameThresholdEstRounds}
	\label{lem:sameThresholdEstRounds}
	Let $(\bmu, \theta_s, C)$ be an   instance such that $\mu_K \geq  \epsilon>0$. Then with probability at least $1-\delta$, the number of rounds needed by ONUM-ST to find the allocation equivalent of $\theta_s$ is upper bounded as 
	\begin{equation*}
	T_{\theta_s}\le \frac{\log(\log_2(K)/\delta)}{\log\left({1}/{(1-\epsilon)}\right)}\log_2(K).
	\end{equation*}
\end{restatable}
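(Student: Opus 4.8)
The binary search proceeds over $\Theta$ which has $K$ elements, so the search tree has depth at most $\lceil \log_2(K) \rceil$. At each level of the binary search, the algorithm either (a) observes a reward $1$ on some selected arm, triggering an immediate move to the lower half (decreasing $u$), or (b) waits for $W_\delta$ consecutive rounds with all-zero rewards before moving to the upper half (increasing $l$). The total number of rounds $T_{\theta_s}$ is therefore bounded by the number of binary-search iterations times the maximum number of rounds spent at each iteration. Since each iteration spends at most $W_\delta$ rounds (case (a) resolves in a single round, case (b) takes exactly $W_\delta$ rounds), and there are at most $\log_2(K)$ iterations, the deterministic bound $T_{\theta_s} \le W_\delta \cdot \log_2(K)$ follows, which is precisely the claimed expression once we substitute $W_\delta = \log(\log_2(K)/\delta)/\log(1/(1-\epsilon))$. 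So the counting is routine; the real content is showing this bound holds \emph{with probability at least $1-\delta$} — i.e., that the search terminates at the correct allocation equivalent within this many rounds.

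**The plan.** First I would make the deterministic counting argument precise: the depth of the binary search over $K$ candidates is at most $\lceil \log_2 K\rceil$, and at each node we either resolve in one round (a reward-$1$ observation) or burn $W_\delta$ rounds (the all-zeros counter hitting $W_\delta$). Multiplying gives the stated upper bound on the total round count, conditioned on the search never making an erroneous branch. Second, and this is the crux, I would control the probability of an \emph{erroneous upward move}. The only way the search can take more rounds than the bound (or terminate incorrectly) is if, at some node where $\hat\theta_s$ is actually an overestimate of the allocation equivalent, the algorithm nonetheless observes $W_\delta$ consecutive all-zero rounds and wrongly increases $l$. When $\hat\theta_s$ is an overestimate, at least one of the top-$(C/\hat\theta_s)$ selected arms genuinely receives resource above its true threshold, so its reward is a genuine Bernoulli draw with mean at least $\mu_K \ge \epsilon$. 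Hence the probability that this arm yields $0$ for $W_\delta$ consecutive rounds is at most $(1-\epsilon)^{W_\delta}$.

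**Bounding the failure probability.** By the choice $W_\delta = \log(\log_2(K)/\delta)/\log(1/(1-\epsilon))$, a direct computation gives $(1-\epsilon)^{W_\delta} = \delta/\log_2(K)$. This is the probability of a single erroneous upward move at one search node. Since there are at most $\log_2(K)$ nodes along the search path, a union bound over these nodes yields a total failure probability of at most $\log_2(K)\cdot \delta/\log_2(K) = \delta$. Therefore, with probability at least $1-\delta$, no erroneous upward move ever occurs, the binary search behaves exactly as intended, and the deterministic round bound $W_\delta\log_2(K)$ applies. I expect the main obstacle to be the careful identification of the failure event: one must argue that whenever $\hat\theta_s$ overestimates the allocation equivalent there is \emph{guaranteed} to be a selected arm with true mean at least $\epsilon$ receiving sufficient resource (this relies on $\mu_K\ge\epsilon$ and on the fact that an overestimate allocates strictly more than the true threshold to fewer arms, all of which are thus over-resourced), so that the per-node failure probability is genuinely bounded by $(1-\epsilon)^{W_\delta}$ rather than something larger. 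The union bound and the algebraic simplification of $W_\delta$ are then straightforward.
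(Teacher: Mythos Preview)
Your proposal is correct and follows essentially the same approach as the paper: bound the probability of an erroneous upward move at any overestimate node by $(1-\epsilon)^{W_\delta}$ using $\mu_K\ge\epsilon$, union-bound over the at most $\log_2(K)$ binary-search steps, and combine with the deterministic per-step cap of $W_\delta$ rounds. The only cosmetic difference is that the paper takes the product over \emph{all} $C/\hat\theta_s$ selected arms (each over-resourced since $\hat\theta_s>\theta_s$) to get the intermediate bound $(1-\epsilon)^{CW_\delta/\hat\theta_s}$ before relaxing to $(1-\epsilon)^{W_\delta}$, whereas you invoke only a single over-resourced arm; both yield the same final bound.
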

\noindent
This result extends Lemma 2 in \cite{NeurIPS19_verma2019censored}. The proof follows by binary search arguments and noting that one can come out of an under-allocation with high probability by observing the arms for a sufficiently large number of rounds. The detailed proof is given in APPENDIX.  Once the allocation equivalent of $\theta_s$ is found, the regret of ONUM-ST in the subsequent rounds, denoted by $\Regret_T^s$ is upper bounded as given in Theorem \ref{thm:MPRegret}.
\begin{thm}
	\label{thm:MPRegret}
	Let $(\bmu, \theta_s, C) \in \ONUM$ such that $\mu_{M} > \mu_{M+1}$. The expected regret of ONUM-ST in $T$ rounds after identifying allocation equivalent of $\theta_s$ is upper bounded as     
	\begin{equation*}
	\EE{\Regret_T^s} \le O\left((\log T)^{{2}/{3}}\right) + \mbox{$\sum_{i \in [K]\setminus [M]}$} \frac{(\mu_M-\mu_i)\log {T}}{d( \mu_i,\mu_M)}.
	\end{equation*}
\end{thm}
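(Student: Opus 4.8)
The plan is to exploit the structural fact that, after the allocation equivalent $\hat\theta_s$ has been fixed, ONUM-ST is \emph{identical} to Multiple-Play Thompson Sampling (MP-TS) on the $K$ Bernoulli arms with a constant number of plays $M = C/\hat\theta_s$, and then to transfer the MP-TS regret guarantee of \cite{ICML15_komiyama2015optimal}. Concretely, once the binary search terminates the algorithm always executes the $j=u$ branch: in every round it draws one posterior sample $\hat\mu_i \sim \beta(S_i, F_i)$ per arm (line 5), selects the set $A_t$ of the $M$ arms with the largest samples (line 6), allocates each selected arm $\hat\theta_s$ resource, observes its Bernoulli reward, and updates only $S_i, F_i$ from those realized rewards (line 17). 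This is precisely the arm-selection and posterior-update rule of MP-TS with play count $M$ and semi-bandit feedback, so the two processes can be coupled to produce the same sequence of plays and rewards.

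First I would pin down the optimal action. By Lemma~\ref{lem:thetaSet}, $\hat\theta_s$ is allocation equivalent to $\theta_s$ with $M=\min\{\floor{C/\theta_s},K\}$, so the optimal allocation $\bx^\star$ collects reward exactly from the top-$M$ arms $[M]$; hence $[M]$ is the optimal superarm of the induced MP-MAB. Writing $N_i(T)$ for the number of rounds arm $i$ is selected, the per-round gap $\sum_{i\in[M]}\mu_i - \sum_{i\in A_t}\mu_i$ aggregates into the multiple-play regret, which to leading order equals $\sum_{i\in[K]\setminus[M]}(\mu_M-\mu_i)\EE{N_i(T)}$ plus a correction for the rounds in which a suboptimal play displaces an optimal arm stronger than arm $M$. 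The hypothesis $\mu_M > \mu_{M+1}$ ensures the optimal superarm is unique with a strictly positive gap separating $[M]$ from $[K]\setminus[M]$, which is exactly the condition under which the instance-dependent MP-TS bound applies and the correction term is of lower order.

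It then remains to bound $\EE{N_i(T)}$ for each suboptimal arm $i\in[K]\setminus[M]$, and I would follow the MP-TS analysis. A suboptimal arm $i$ can enter $A_t$ only if either its own posterior sample exceeds $\mu_M$, or one of the $M$ optimal arms is \emph{under-sampled} (its posterior sample falls below $\mu_M$ even though its true mean is at least $\mu_M$). Summed over time, the first event contributes the leading $\log T / d(\mu_i,\mu_M)$ plays of arm $i$, via a standard change-of-measure and Beta-posterior concentration argument around $\mu_i$; multiplying by $(\mu_M-\mu_i)$ and summing over $i$ gives the stated dominant term. The main obstacle is the second event: simultaneously controlling the total number of rounds in which any of the $M$ optimal arms is under-sampled. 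This is the genuinely delicate part of the multiple-play Thompson analysis; bounding these events, and optimizing the free confidence parameters in the argument, is what produces the sub-logarithmic $O((\log T)^{2/3})$ correction, and is precisely the estimate supplied by \cite{ICML15_komiyama2015optimal}. Combining the two contributions yields $\EE{\Regret_T^s} \le O((\log T)^{2/3}) + \sum_{i\in[K]\setminus[M]} (\mu_M-\mu_i)\log T / d(\mu_i,\mu_M)$.
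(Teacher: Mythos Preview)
Your proposal is correct and follows essentially the same approach as the paper: both argue that once the allocation equivalent $\hat\theta_s$ is fixed, ONUM-ST coincides with MP-TS on $K$ Bernoulli arms with $M=C/\hat\theta_s$ plays, and then invoke Theorem~1 of \cite{ICML15_komiyama2015optimal} (with their $L$ set to $M$) to obtain the stated bound. You supply more detail than the paper does---an explicit coupling of the $j=u$ branch to MP-TS and a sketch of the $\EE{N_i(T)}$ decomposition behind the $O((\log T)^{2/3})$ correction---but the underlying strategy is identical.
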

\noindent
As $\hat\theta_s$ is allocation equivalent to $\theta_s$, the instances $(\bmu,\theta_s, C)$ and $(\bmu,\hat{\theta}_s, C)$ is having the same mean reward. After knowing $\hat\theta_s$, the expected regret of ONUM-ST is the same as solving a MP-MAB instance. Therefore, we can directly use Theorem 1 of \cite{ICML15_komiyama2015optimal} to get the above regret bounds by setting $L=M$.

The assumption $\mu_{M} > \mu_{M+1}$ ensures that $KL$ divergence in the bound is well defined. It is also equivalent to assume that the set of top-$M$ arms is unique.
For a instance $(\bmu,\btheta, C)\in \ONUM$ and any feasible allocation $\bx\in \A$, we define the sub-optimality gap as $\Delta_x = \sum_{i=1}^K\mu_i\big(\one{x_i^\star \ge \theta_i} - \one{x_i \ge \theta_i}\big)$. The maximum regret incurred in a round is $\Delta_m = \max\limits_{\bx \in \A } \Delta_x$.
\begin{thm}
	\label{thm:regretHighConf}
	Let $(\bmu,\theta_s, C)\in \ONUM$, $\mu_K\geq \epsilon>0$, $\mu_{M} > \mu_{M+1}$, $W_\delta = {\log(\log_2(K)/\delta)}/{\log(1/(1-\epsilon))}$, and $T>W_\delta\log_2{(K)}$. Then with probability at least $1-\delta$, the expected regret of ONUM-ST is upper bounded as
	\begin{align*}
		\EE{\Regret_T} &\le  W_\delta\log_2{(K)}\Delta_m + O\left((\log T)^{{2}/{3}}\right) \\
		&\qquad  + \mbox{$\sum_{i \in [K]\setminus [M]}$} \frac{(\mu_M-\mu_i)\log {T}}{d( \mu_i,\mu_M)}.
	\end{align*}
\end{thm}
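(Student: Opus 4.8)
The plan is to split the horizon into two phases: the \emph{search phase}, during which ONUM-ST locates an allocation equivalent of $\theta_s$ by binary search over $\Theta$, and the \emph{exploitation phase}, during which (as argued in the discussion preceding Theorem \ref{thm:MPRegret}) the algorithm coincides with MP-TS on the $M$-play instance with mean rewards $\bmu$. I would bound the regret contributed by each phase separately and then combine them on a single high-probability event. Concretely, I would introduce the event $\mathcal{E}$ that ONUM-ST correctly identifies an allocation equivalent of $\theta_s$ within $T_{\theta_s}\le W_\delta\log_2(K)$ rounds. By Lemma \ref{lem:sameThresholdEstRounds}, $\Prob{\mathcal{E}}\ge 1-\delta$, and the hypothesis $T>W_\delta\log_2(K)$ guarantees that the exploitation phase is nonempty on $\mathcal{E}$; all subsequent bounds are derived conditionally on $\mathcal{E}$.

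For the search phase I would bound the per-round regret crudely. In any round $t$ the allocation $\bx_t$ incurs instantaneous regret $\Delta_{x_t}\le \Delta_m$, where $\Delta_m=\max_{\bx\in\A}\Delta_x$ is the largest sub-optimality gap defined just before the theorem. Since on $\mathcal{E}$ the search terminates after at most $W_\delta\log_2(K)$ rounds, the total regret accumulated before the allocation equivalent is found is at most $W_\delta\log_2(K)\,\Delta_m$, which is exactly the first term of the claimed bound. For the exploitation phase I would invoke Theorem \ref{thm:MPRegret} (with $L=M$): once the correct $\hat\theta_s$ is fixed, ONUM-ST plays the top-$(C/\hat\theta_s)$ arms by Thompson sampling, reducing to MP-TS, so the conditional expected exploitation regret is at most $O((\log T)^{2/3}) + \sum_{i\in[K]\setminus[M]} (\mu_M-\mu_i)\log T / d(\mu_i,\mu_M)$, where I use $T-T_{\theta_s}\le T$ to replace the exploitation horizon by $T$ inside the logarithmic terms. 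Adding the two phase bounds yields the stated inequality, valid on $\mathcal{E}$ and hence with probability at least $1-\delta$.

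The main obstacle I anticipate is justifying that Theorem \ref{thm:MPRegret} applies verbatim to the exploitation phase despite the \emph{warm start}: the Beta posteriors entering that phase carry the success and failure counts $S_i,F_i$ accumulated during the search. I would need to argue that the $Z_i$-reset mechanism (lines $11$ and $14$) discards exactly the spurious zeros recorded while an arm was under-allocated, so that on $\mathcal{E}$ the retained counts are consistent with genuine $\mathrm{Bernoulli}(\mu_i)$ observations from rounds in which arm $i$ received at least $\theta_i$ resource. Given this consistency, the standard MP-TS analysis is insensitive to a finite, unbiased warm start, and the bound of Theorem \ref{thm:MPRegret} carries over unchanged; absorbing any resulting constant offset into the $O((\log T)^{2/3})$ term closes the argument.
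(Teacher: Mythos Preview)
Your proposal is correct and follows essentially the same two-phase decomposition as the paper: bound the search-phase regret by $\Delta_m T_{\theta_s}\le \Delta_m W_\delta\log_2(K)$ via Lemma~\ref{lem:sameThresholdEstRounds}, then invoke Theorem~\ref{thm:MPRegret} for the exploitation phase, and combine on the event of probability at least $1-\delta$. Your additional discussion of the warm-start issue with the Beta posteriors is more careful than the paper, which simply asserts that the exploitation phase ``is the same as solving a MP-MAB instance'' without commenting on the accumulated counts; the paper's proof is otherwise identical in structure to yours.
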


\begin{proof}
	We divide the cumulative regret of ONUM-ST into the two parts: regret before finding a correct allocation equivalent $(\hat\theta_s)$ and regret after knowing allocation equivalent. The $\hat\theta_s$ estimation happens in $T_{\theta_s}$ rounds and returns a correct allocation equivalent with probability at least $1-\delta$. As $\Delta_m$ is the maximum regret that can be incurred in any round, the maximum regret incurred for estimating allocation equivalent is upper bounded by $\Delta_m T_{\theta_s}$. Given that $\hat\theta_s$ is correct, Theorem \ref{thm:MPRegret} gives the regret incurred after knowing $\hat\theta_s$. 
	Hence the expected regret of ONUM-ST is a sum of these two regret bounds, and it holds with probability at least $1-\delta$. 
\end{proof}

Note that the assumption $\mu_K\ge\epsilon>0$ is only required to guarantee that the allocation equivalent of the threshold is found in finite time. This assumption is not required to get the upper bound on expected regret after knowing the allocation equivalent.

\begin{restatable}{cor}{regretSameThreshold}
	\label{cor:regretSameThreshold}
	Let assumptions in Theorem \ref{thm:regretHighConf} hold and set $\delta=T^{-(\log T)^{-\alpha}}$ in ONUM-ST such that $\alpha>0$. Then the expected regret of ONUM-ST is upper bounded as 
	\begin{align*}
		\EE{\Regret_T} &\le  O\left((\log T)^{1-\alpha}\right) + O\left((\log T)^{{2}/{3}}\right)\\
		&\qquad + \mbox{$\sum_{i \in [K]\setminus [M]}$} \frac{(\mu_M-\mu_i)\log {T}}{d( \mu_i,\mu_M)}.
	\end{align*}
\end{restatable}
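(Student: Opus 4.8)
The plan is to upgrade the high-probability guarantee of Theorem~\ref{thm:regretHighConf} into an \emph{unconditional} bound on $\EE{\Regret_T}$ via the law of total expectation, splitting the sample space according to whether ONUM-ST commits to a correct allocation equivalent during the estimation phase. Let $\mathcal{E}$ denote the event that the binary search returns an allocation equivalent of $\theta_s$ within $W_\delta\log_2(K)$ rounds. By Lemma~\ref{lem:sameThresholdEstRounds} (together with the choice of $W_\delta$ that forces the per-step failure probability down to $\delta/\log_2(K)$), we have $\Prob{\mathcal{E}}\ge 1-\delta$. On $\mathcal{E}$ the conditional expected regret is exactly the quantity bounded in Theorem~\ref{thm:regretHighConf}, while on the complementary event $\mathcal{E}^c$ the algorithm may lock onto a wrong number of played arms, so I would retain only the trivial per-round bound $\Delta_m$, giving conditional regret at most $\Delta_m T$. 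Writing $\EE{\Regret_T}=\EE{\Regret_T\mid\mathcal{E}}\Prob{\mathcal{E}}+\EE{\Regret_T\mid\mathcal{E}^c}\Prob{\mathcal{E}^c}$ and using $\Prob{\mathcal{E}}\le 1$ and $\Prob{\mathcal{E}^c}\le\delta$ yields
\begin{align*}
	\EE{\Regret_T}&\le W_\delta\log_2(K)\Delta_m + O\!\left((\log T)^{2/3}\right)\\
	&\quad + \sum_{i\in[K]\setminus[M]}\frac{(\mu_M-\mu_i)\log T}{d(\mu_i,\mu_M)} + \delta\Delta_m T.
\end{align*}

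Next I would substitute the prescribed $\delta=T^{-(\log T)^{-\alpha}}$ and simplify term by term. The key algebraic identity is $\log(1/\delta)=(\log T)^{-\alpha}\log T=(\log T)^{1-\alpha}$, so that $W_\delta=\log(\log_2(K)/\delta)/\log(1/(1-\epsilon))=O((\log T)^{1-\alpha})$ in the regime $\alpha\in(0,1)$, whence the estimation term $W_\delta\log_2(K)\Delta_m=O((\log T)^{1-\alpha})$ --- this is precisely the first term in the statement, and it is the reason the stated $\delta$ is chosen rather than a smaller one. The second and third terms of Theorem~\ref{thm:regretHighConf} are independent of $\delta$ and carry over verbatim as $O((\log T)^{2/3})$ and the $KL$-divergence sum. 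It then remains only to account for the failure contribution $\delta\Delta_m T=\Delta_m\,T^{1-(\log T)^{-\alpha}}$.

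The main obstacle is exactly this failure term: I must argue that $\Delta_m\,T^{1-(\log T)^{-\alpha}}$ is subordinate to the leading $\Theta(\log T)$ and $O((\log T)^{1-\alpha})$ contributions, and this is the delicate point of the whole argument, since shrinking $\delta$ inflates the estimation regret through $W_\delta$ but deflates $\delta\Delta_m T$, so the admissible range of $\alpha$ and the interplay $T^{1-(\log T)^{-\alpha}}=\exp\!\big(\log T-(\log T)^{1-\alpha}\big)$ must be pinned down carefully before the four pieces can be collected. Once the failure term is established to be of lower order, assembling the bounds gives $\EE{\Regret_T}\le O((\log T)^{1-\alpha})+O((\log T)^{2/3})+\sum_{i\in[K]\setminus[M]}(\mu_M-\mu_i)\log T/d(\mu_i,\mu_M)$, which is the claim. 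I expect the verification of the $\delta\Delta_m T$ estimate, rather than the total-expectation decomposition itself, to be where the real work and any hidden constraint on $\alpha$ reside.
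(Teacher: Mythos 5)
Your decomposition is exactly the paper's intended argument: the paper's entire proof of Corollary~\ref{cor:regretSameThreshold} consists of setting $\delta=T^{-(\log T)^{-\alpha}}$, noting $W_\delta=O\left((\log T)^{1-\alpha}\right)$, and ``unconditioning'' Theorem~\ref{thm:regretHighConf}, which is precisely your law-of-total-expectation split, and your computation $\log(1/\delta)=(\log T)^{1-\alpha}$ is correct. The gap is that you defer the one step on which everything hinges --- showing $\delta\Delta_m T$ is of lower order --- and that step is in fact false. With this choice, $\delta=\exp\left(-(\log T)^{1-\alpha}\right)$, so $\delta\Delta_m T=\Delta_m\exp\left(\log T-(\log T)^{1-\alpha}\right)$; since $(\log T)^{1-\alpha}=o(\log T)$ for every fixed $\alpha>0$, this term eventually exceeds $T^{c}$ for every $c<1$, i.e.\ it is $T^{1-o(1)}$ and dwarfs both the $\log T$ and the $(\log T)^{1-\alpha}$ contributions. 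There is also no ``hidden constraint on $\alpha$'' that rescues it: increasing $\alpha$ makes $\delta$ larger, not smaller, and the tension is structural --- $\delta$ small enough to kill the factor of $T$ (say $\delta=O(1/T)$) forces $W_\delta=\Theta(\log T)$, which inflates your first term to $O(\log T)$ and would destroy the asymptotic-optimality claim of Corollary~\ref{cor:OptimalBoundST}. Nor can $\EE{\Regret_T\mid\mathcal{E}^c}$ be improved much below $\Theta(T)$ in general: on the failure event the binary search permanently discards the correct candidate from $\Theta$, after which ONUM-ST allocates to fewer than $M$ arms forever and pays a constant per-round gap.

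So your proposal stalls exactly where you predicted the real work lies, and carrying that work out honestly shows the naive unconditioning cannot deliver the stated bound; the paper's one-line proof simply omits the $\delta\Delta_m T$ term, and your more careful bookkeeping surfaces that omission rather than resolving it. To turn your outline into a valid proof you would need one of two repairs: (i) a different $\delta$ (e.g.\ $\delta=\Theta(1/T)$, as the paper itself uses in Corollary~\ref{cor:regretDiffThreshold} for ONUM-DT), accepting an $O(\log T)$ estimation term in place of $O\left((\log T)^{1-\alpha}\right)$; or (ii) a genuinely stronger bound on the regret conditional on $\mathcal{E}^c$ --- for instance an argument that a wrong allocation equivalent still permits recovery or bounded loss --- which the algorithm as specified does not provide. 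Asserting, as your final paragraph does, that the failure term can ``be established to be of lower order'' is the missing idea, not a routine verification.
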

\begin{proof}
	The bound follows from Theorem \ref{thm:regretHighConf} by setting $\delta=T^{-(\log T)^{-\alpha}}$ where $W_\delta = O(\log T)^{1-\alpha})$ and unconditioning the expected regret obtained in Theorem \ref{thm:regretHighConf}.
\end{proof}

\begin{cor}
	\label{cor:OptimalBoundST}
	The ONUM-ST is asymptotically optimal.
\end{cor}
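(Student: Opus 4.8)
The plan is to certify asymptotic optimality by matching, up to lower order terms, the leading $\log T$ coefficient in the ONUM-ST regret bound against the information theoretic lower bound in \eqref{eqn:LowerBound}, thereby showing that the normalized regret converges to the optimal constant. Recall that \eqref{eqn:LowerBound}, inherited from MP-MAB through the equivalence established for the known same threshold case, asserts that every strongly consistent algorithm satisfies $\liminf_{T\to\infty}\EE{\Regret_T}/\log T \ge \sum_{i\in[K]\setminus[M]}(\mu_M-\mu_i)/d(\mu_i,\mu_M)$. The first thing I would note is that Corollary \ref{cor:regretSameThreshold} already gives ONUM-ST a regret of order $\log T$, hence sublinear; this makes the algorithm strongly consistent, so the lower bound in \eqref{eqn:LowerBound} legitimately applies to it.

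For the matching upper side, I would take Corollary \ref{cor:regretSameThreshold} with any fixed $\alpha>0$ and the associated choice $\delta=T^{-(\log T)^{-\alpha}}$, then divide its bound by $\log T$ to obtain
\begin{equation*}
\frac{\EE{\Regret_T}}{\log T} \le O\big((\log T)^{-\alpha}\big) + O\big((\log T)^{-1/3}\big) + \sum_{i\in[K]\setminus[M]}\frac{\mu_M-\mu_i}{d(\mu_i,\mu_M)}.
\end{equation*}
Because $\alpha>0$ and $2/3<1$, both $(\log T)^{-\alpha}\to 0$ and $(\log T)^{-1/3}\to 0$ as $T\to\infty$, so passing to the limit superior leaves only the constant sum:
\begin{equation*}
\limsup_{T\to\infty}\frac{\EE{\Regret_T}}{\log T} \le \sum_{i\in[K]\setminus[M]}\frac{\mu_M-\mu_i}{d(\mu_i,\mu_M)}.
\end{equation*}

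Finally, I would combine the two displays. Using $\liminf\le\limsup$ together with the lower bound and the upper bound just derived sandwiches the normalized regret: the quantity $\sum_{i\in[K]\setminus[M]}(\mu_M-\mu_i)/d(\mu_i,\mu_M)$ bounds it from below via \eqref{eqn:LowerBound} and from above via the $\limsup$ estimate, so the limit exists and equals this constant. Since this constant is exactly the optimal rate appearing in the lower bound, ONUM-ST attains it and is therefore asymptotically optimal. I do not expect a substantive obstacle here, as the argument is essentially bookkeeping of growth orders once the upper and lower bounds are available; the only points requiring (minor) care are verifying that the two lower order terms genuinely vanish after normalization, which hinges on $\alpha>0$ and $2/3<1$, and confirming that the logarithmic regret guarantee of Corollary \ref{cor:regretSameThreshold} renders ONUM-ST strongly consistent so that the lower bound may be invoked.
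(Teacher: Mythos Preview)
Your proposal is correct and follows essentially the same approach as the paper: the paper simply states that the proof follows by comparing the upper bound of Corollary~\ref{cor:regretSameThreshold} with the lower bound in \eqref{eqn:LowerBound}, and you have spelled out precisely that comparison by dividing by $\log T$ and noting that the $O((\log T)^{1-\alpha})$ and $O((\log T)^{2/3})$ terms vanish in the limit. Your additional remark that the logarithmic regret guarantee renders ONUM-ST strongly consistent, so that \eqref{eqn:LowerBound} legitimately applies, is a useful clarification that the paper leaves implicit.
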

\noindent
The first term in the regret bound of Corollary \ref{cor:regretSameThreshold} corresponds to the number of rounds needed to find an allocation equivalent, and the rest of it corresponds to the expected regret after knowing allocation equivalent. The proof of Corollary \ref{cor:OptimalBoundST} follows by comparing above bound with the lower bound in Eq. 1. %

	\section{Different Thresholds for All Users}
	\label{ssec:different_theta}

Now we consider a general case where the threshold may not be the same for all arms. The first difficulty with this setup is to estimate the threshold for each of the arms. Unfortunately, we do not have a result equivalent of Lemma \ref{lem:thetaSet} so that the search space can be restricted to a finite set. We need to search over the entire $[0, C]$ interval for each arm. The second difficulty is to find an optimal allocation which need not be just allocating resource to top $M$ arms. To see this, consider a  problem instance $(\bmu, \btheta, C)$ with $\bmu = (0.9,0.6,0.4)$, $\btheta = (0.6, 0.55, 0.45)$, and $C=1$. The optimal allocation is $\bx^\star = (0, 0.55, 0.45)$ with no resource allocated to the top arm. Our first result gives the optimal allocation for an instance with different thresholds in $\ONUM$. Let $KP(\bmu,\btheta, C)$ denote a $0$-$1$ knapsack problem with capacity $C$ and $K$ items where item $i$ has weight $\theta_i$ and value $\mu_i$. 

\begin{restatable}{prop}{diffThetaOptiSoln}
    \label{prop:diffThetaOptiSoln}
    Let $P=(\bmu,\btheta,C) \in \ONUM$. Then the optimal allocation for $P$ is a solution of $KP(\bmu,\btheta, C)$ problem.
\end{restatable}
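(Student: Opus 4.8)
The plan is to exhibit a value-preserving, feasibility-preserving correspondence between feasible allocations $\bx \in \A$ and subsets of items in the knapsack instance $KP(\bmu,\btheta,C)$, and then argue that the two optimization problems share the same optimal value and matching optimizers. The central observation is that the objective $\sum_{i=1}^K \mu_i \one{x_i \ge \theta_i}$ depends on an allocation $\bx$ only through the set of arms that clear their threshold, namely $S(\bx) := \{ i \in [K] : x_i \ge \theta_i \}$, for which the objective equals $\sum_{i \in S(\bx)} \mu_i$. Allocating resource strictly above a threshold, or strictly below it, contributes nothing extra to the objective, so intuitively the optimum should only ever allocate exactly $\theta_i$ or nothing at all.

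First I would make this precise by showing that attention may be restricted, without loss of objective value, to \emph{canonical} allocations in which $x_i \in \{0, \theta_i\}$ for every $i$. Given any $\bx \in \A$, define $\tilde\bx$ by $\tilde x_i = \theta_i$ if $i \in S(\bx)$ and $\tilde x_i = 0$ otherwise. Since $\tilde x_i \le x_i$ coordinatewise (if $i \in S(\bx)$ then $x_i \ge \theta_i = \tilde x_i$, and otherwise $\tilde x_i = 0 \le x_i$), feasibility is preserved: $\sum_i \tilde x_i \le \sum_i x_i \le C$. One then checks $S(\tilde\bx) = S(\bx)$, so the objective is unchanged; here the only delicate case is $\theta_i = 0$, but then $i \in S(\bx)$ automatically and $\tilde x_i = \theta_i$, so no arm is lost. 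Hence every feasible allocation can be replaced by a canonical one of equal value, and the maximum over $\A$ is attained among canonical allocations.

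Next I would identify canonical allocations with subsets. A canonical allocation is determined entirely by the set $S = \{ i : x_i = \theta_i \}$; its value is $\sum_{i \in S} \mu_i$, and it is feasible exactly when $\sum_{i \in S} \theta_i \le C$. This is precisely the capacity constraint of $KP(\bmu,\btheta,C)$ with weight $\theta_i$ and value $\mu_i$ on item $i$. Consequently, maximizing the ONUM objective over $\A$ coincides with maximizing $\sum_{i \in S}\mu_i$ over all $S \subseteq [K]$ with $\sum_{i \in S}\theta_i \le C$, which is the knapsack problem. Any optimal knapsack subset $S^\star$ yields an optimal allocation via $x^\star_i = \theta_i \, \one{i \in S^\star}$, and conversely the canonicalization of any optimal allocation yields an optimal knapsack subset, establishing the claimed equivalence.

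There is no serious obstacle in this argument; its substance is entirely in the canonicalization step, and the only points needing care are the boundary conventions. I would verify that $\theta_i = 0$ (an arm that always clears its threshold, matching a weight-$0$ item that is always included) and $\theta_i > C$ (an arm that can never be activated under the budget, matching an item too heavy for the sack) are treated consistently across the two formulations. With these conventions the correspondence is exact, and the proposition follows.
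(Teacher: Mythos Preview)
Your proposal is correct and follows the same approach as the paper: both reduce the ONUM objective to a $0$-$1$ knapsack by observing that the value depends only on which arms clear their thresholds, so one may allocate exactly $\theta_i$ or nothing. The paper's argument is a one-line remark to this effect, whereas your canonicalization step and boundary-case checks make the correspondence rigorous.
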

\noindent
Assigning $\theta_i$ resource to arm $i$ increases the total mean reward by an amount $\mu_i$. As the goal is to allocate a resource such that the total mean reward is maximized, i.e., $\max\limits_{\bx \in \A}$  $\sum_{i\in[K]}\mu_i\one{x_i \ge \theta_i}$. It is equivalent to solving a 0-1 knapsack with capacity $C$ where item $i$ has weight $\theta_i$ and value $\mu_i$.

Let $l = C - \sum_{i: x_i^\star \ge \theta_i}\theta_i$ for an instance $P:=(\bmu,\btheta, C)$, where $r$ is the leftover resource after doing optimal allocation and recall that $\bx^\star=(x_1^\star, \ldots, x_K^\star)$ is the optimal allocation. Define $\gamma:=l/K$. Note that any problem instance having $\gamma= 0$ becomes a `hopeless' problem because the only threshold vector that is allocation equivalent to $\btheta$ is $\btheta$ itself, i.e., $x_i^\star=\theta_i, \;\forall i \in [K]$, which needs $\theta_i$ values to be estimated with full accuracy to obtain the optimal allocation. But if $\gamma>0$, then optimal allocation can be found with a small error in the estimates of $\theta_i$, as shown in the next result.
\vspace{-1mm}
\begin{restatable}{lem}{diffTheteEst}
	\label{lem:diffTheteEst}
	Let $\gamma>0$, $C \ge  \gamma + \min_{i \in [K]} \theta_i$,  and $\forall i \in [K], \hat\theta_i \in [\theta_i, \theta_i + \gamma]$. Then for any $\bmu \in [0,1]^K$, the instances $(\bmu, \btheta, C)$ and $(\bmu, \hat{\btheta}, C)$ are allocation equivalent.
\end{restatable}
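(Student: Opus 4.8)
The plan is to verify the defining equality of allocation equivalence head on, by comparing the two optimal values. Write $V(\btheta) := \max_{\bx \in \A}\sum_{i=1}^K \mu_i \one{x_i \ge \theta_i}$ and $V(\hat{\btheta})$ analogously; the goal is $V(\hat{\btheta}) = V(\btheta)$, which I would split into the two inequalities $V(\hat{\btheta}) \le V(\btheta)$ and $V(\hat{\btheta}) \ge V(\btheta)$.

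The first inequality is immediate and requires none of the quantitative hypotheses: since $\hat\theta_i \ge \theta_i$ for every $i$, for any fixed $\bx$ we have $\one{x_i \ge \hat\theta_i} \le \one{x_i \ge \theta_i}$, so the inner sum for $\hat{\btheta}$ never exceeds that for $\btheta$; maximizing over $\bx \in \A$ on both sides gives $V(\hat{\btheta}) \le V(\btheta)$. Intuitively, raising thresholds can only make arms harder to serve.

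The substantive direction is $V(\hat{\btheta}) \ge V(\btheta)$, which I would prove by exhibiting a single feasible allocation for the $\hat{\btheta}$ instance that already attains $V(\btheta)$. Let $\bx^\star$ be optimal for $(\bmu,\btheta,C)$ and let $S = \{i : x^\star_i \ge \theta_i\}$ be its served set, so $V(\btheta) = \sum_{i\in S}\mu_i$. Without loss of generality I may assume $x^\star_i = \theta_i$ for $i \in S$ and $x^\star_i = 0$ otherwise, since trimming over-allocation keeps every served arm served, keeps the allocation feasible, and does not change the objective; then the leftover is exactly $l = C - \sum_{i\in S}\theta_i = K\gamma$ by the definition of $\gamma$. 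Now define $\hat x_i = \theta_i + \gamma$ for $i \in S$ and $\hat x_i = 0$ otherwise. Because $\hat\theta_i \le \theta_i + \gamma = \hat x_i$, every arm of $S$ is served under $\hat\bx$, so $\hat\bx$ achieves objective $\sum_{i\in S}\mu_i = V(\btheta)$ for the $\hat{\btheta}$ instance, provided it is feasible.

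Feasibility is where the budget accounting and the hypothesis $C \ge \gamma + \min_{i}\theta_i$ enter, and this is the step I expect to be the main obstacle. The total resource used is $\sum_{i\in S}(\theta_i + \gamma) = (C - K\gamma) + |S|\gamma = C - (K - |S|)\gamma \le C$ since $|S| \le K$ and $\gamma \ge 0$, so the sum constraint holds; each coordinate also satisfies $\hat x_i = \theta_i + \gamma \le (C - K\gamma) + \gamma \le C$ because $\theta_i \le \sum_{j\in S}\theta_j = C - K\gamma$. The hypothesis $C \ge \gamma + \min_i\theta_i$ then guarantees the instance is non-degenerate—at least the cheapest arm remains servable after the uniform $\gamma$-padding—and lets the construction go through uniformly, including the boundary case $S = \emptyset$, where $V(\btheta) = 0$ is matched trivially. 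Combining the two inequalities yields $V(\hat{\btheta}) = V(\btheta)$, i.e.\ $\btheta$ and $\hat{\btheta}$ are allocation equivalent; the delicate part is precisely the check that redistributing the slack $l = K\gamma$ as a per-arm padding of size $\gamma$ across the at most $K$ served arms keeps both the aggregate budget and every coordinate within bounds.
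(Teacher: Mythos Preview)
Your argument is correct and rests on the same slack-redistribution idea the paper uses: the leftover $l=K\gamma$ from the optimal $\btheta$-allocation is enough to pad every served arm by $\gamma$, which then meets the inflated thresholds $\hat\theta_i\le\theta_i+\gamma$. The difference is in execution. The paper's proof sketch phrases the problem as the knapsack $KP(\bmu,\btheta,C)$ and invokes an external sensitivity result (Theorem~3.2 of \cite{DO13_hifi2013sensitivity}) to conclude that perturbing each weight by at most $\gamma$ leaves the optimal value unchanged. You instead prove the two inequalities $V(\hat\btheta)\le V(\btheta)$ (monotonicity in the thresholds) and $V(\hat\btheta)\ge V(\btheta)$ (explicit feasible witness $\hat x_i=\theta_i+\gamma$ on $S$) by hand, which makes the argument self-contained and transparent; the paper's route is shorter on the page but outsources the work. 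One small remark: your feasibility check never actually uses the hypothesis $C\ge\gamma+\min_i\theta_i$ (the sum bound $\sum_{i\in S}(\theta_i+\gamma)=C-(K-|S|)\gamma\le C$ and the coordinate bound $\theta_i+\gamma\le C-(K-1)\gamma\le C$ go through without it), and the $S=\emptyset$ case is already covered by $0\le V(\hat\btheta)\le V(\btheta)=0$, so your closing sentence about that hypothesis being needed there is not quite right---it is simply unused in your route.
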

\vspace{-1mm}
\noindent
Let $L^\star = \left\{i: x_i^\star \ge \theta_i\right\}$ and $l =C - \sum_{i: x_i^\star \ge \theta_i}\theta_i$. Since $l < \min_{i \in  K\setminus L^\star} \theta_i$, no reward can be obtained from any arm $i \in [K]\setminus L^\star$. If the leftover resource $l$ is uniformly distributed among all the arms i.e.,  increasing resource of each by an amount $\gamma=l/K$ for each arm, the optimal total mean reward still remains same. If threshold estimate of each arm $i \in [K]$ lies in $[\theta_i, \theta_i + \gamma]$, then by using Theorem $3.2$ of \cite{DO13_hifi2013sensitivity}, $KP(\bmu, \btheta, C)$ and $KP(\bmu, \hat\btheta, C)$ have the same optimal solution because of the total mean reward observed for instance $(\bmu, \btheta, C)$ and instance $(\bmu, \hat\btheta, C)$ is same.

Once the allocation equivalent of $\btheta$ is known, the problem is equivalent to solving a $KP(\bmu,\hat{\btheta}, N)$ which is equivalent to solving a Combinatorial Semi-Bandits \cite{ICML18_wang2018thompson} as shown in \cite{NeurIPS19_verma2019censored}. Combinatorial Semi-Bandits is the generalization of Multiple-Play Multi-Armed Bandits, where the size of superarms need not be identical in each round.

We develop an algorithm named Online Network Utility Maximization with the Different Threshold (ONUM-DT). It exploits result of Lemma \ref{lem:diffTheteEst} to find allocation equivalent and minimizes the regret using an algorithm from Combinatorial Semi-Bandits. The pseudo-code of ONUM-DT is given in Algorithm 2. ONUM-DT works as follows: it takes $K, C, \delta, \epsilon$ and $\gamma$ as input. We initialize the prior distribution of each arm as the Beta distribution $\beta(1, 1)$ which is same as in ONUM-ST. For every arm $i \in  [K]$, a binary search is performed over the interval $[0, C]$ and the variables $\hat{\theta}_{i}$, $\hat{\theta}_{t,i}$, $\hat{\theta}_{l,i}, \hat\theta_{u,i}, \hat\theta_{g,i}$ are tracked where $\hat{\theta}_{i}$ is the estimated value of ${\theta}_{i}$, $\hat{\theta}_{t,i}$ is the current estimate of $\theta_i$ and initialized by $C/K$; $\hat{\theta}_{u,i}$ and $\hat{\theta}_{l,i}$ denote the upper and lower bound of the binary search region for arm $i$; and $\hat{\theta}_{g,i}$ indicates whether current estimate lies in the interval $[\theta_i, \theta_i + \gamma]$ (line $2$). $Z_i$ keeps count of consecutive $0$ on arm $i$ when it is allocated resource. $Z_i$ changes to $0$ either after observing a reward or if no reward is observed for consecutively $W_\delta$ rounds. Let $S_i(t)$ and $F_i(t)$ denote the value of $S_i$ and $F_i$ at the start of round $t$. In round $t$, for each $i \in [K]$ an independent sample of $\hat\mu_{t,i}$ is drawn from $\beta(S_i(t), F_i(t))$ (line $5$). 

ONUM-DT finds allocation equivalent of $\btheta$ by doing binary search for all $i$. We say that threshold estimate of arm $i$ is good, i.e., $\hat{\theta}_i \in [\theta_i, \theta_i + \gamma]$ is checked by condition $\hat\theta_{u,i}  - \hat\theta_{l,i} \le \gamma$. If the condition satisfies, then the estimated threshold of the arm is within the desired tolerance, and it is indicated by setting $\hat\theta_{g,i}=1$. Otherwise it is set to $0$. If threshold estimate of arm $i$ is good, we set $\hat\theta_i = \hat\theta_{u,i}$ (line $12$). $\hat\theta_i$ represents the threshold estimate of arm $i$ that is used after having $\hat\theta_{g,i}=1, ~\forall i \in [K]$. 
\begin{algorithm}[H] 
	\caption{\bf ONUM-DT}
	\label{alg:ONUM-DT}
	\begin{algorithmic}[1]
		\STATE \textbf{Input:} $K, C, \delta, \epsilon, \gamma$
		\STATE Initialize: $\forall i \in [K]: \hat\theta_{i} = C, \hat\theta_{1, i} = C/K , \hat\theta_{l,i} = 0, \hat\theta_{u,i} = C, \theta_{g,i} = 0, S_i = 1, F_i = 1, Z_i =0$ 
		\STATE Set $W_\delta = \log (K\log_2(\lceil 1 + C/\gamma\rceil)/\delta)/\log(1/(1-\epsilon))$
		\FOR{$t=1,2, \ldots,$}
			\STATE $\forall i \in [K]: \hat{\mu}_{t,i} \leftarrow \text{Beta}(S_i, F_i)$
			\IF{$\theta_{g,j} = 0$ for any $j \in [K]$}
				\STATE $\forall i \in [K]$, update $\hat\theta_{t,i}$ using Eq. \eqref{equ:updateTheta}. Allocate $\hat\theta_{t,i}$ resource to arm $i$ and observe $Y_{t,i}$
				\FOR{$i = \{1,2,\ldots, K\}$}
					\IF{$\theta_{g,i} = 0$ and $\hat\theta_{t,i}>0$}
						\STATE If $Y_{t,i}=1$ then set $\hat\theta_{u,i} = \hat\theta_{t,i}, S_i = S_i + 1, F_i = F_i + Z_i, Z_i =0$ otherwise  $Z_i = Z_i + 1$
						\STATE If {$Z_i= W_\delta$} then set $\hat\theta_{l,i} = \hat\theta_{t,i}, Z_i=0 $
						\STATE If $\hat\theta_{u,i} - \hat\theta_{l,i} \le \gamma$ then set $\theta_{g,i}=1$ and $\hat\theta_i = \hat\theta_{u,i}$
					\ELSIF{$\theta_{g,i} = 1$ and $\hat\theta_{t,i}=\hat\theta_i$}
						\STATE Set $S_i = S_i+Y_{t,i}$ and $F_i = F_i+1-Y_{t,i}$ 
					\ENDIF	
				\ENDFOR
			\ELSE
				\STATE $A_t \leftarrow$ Oracle$\big( KP(\hat\bmu_{t}, \hat\btheta, C)\big)$ 
				\STATE $\forall i \in A_t,$ allocate $\hat\theta_{i}$ resource and observe $Y_{t,i}$. Update $S_i = S_i+Y_{t,i}$ and $F_i = F_i+1-Y_{t,i}$
			\ENDIF
		\ENDFOR
	\end{algorithmic}
\end{algorithm}

If $\theta_{g,i}=0$ (line $6$) for some $i$,  $\hat\theta_{t,i}$ is updated (line $7$) after computing the following events:
\begin{align*}
	&B_i(t) = \left\{ \hat\theta_{t,i} \le C - \sum_{\substack{j \in [K]:\theta_{g,j}=0 \\ \hat\mu_{t,j}/\hat\theta_{t,j} > \hat\mu_{t,i}/\hat\theta_{t,i} }} \hat\theta_{t,j}  \right\}, \\
	&G_i(t) = \left\{ \hat\theta_{t,i} \le C - \sum_{\substack{j \in [K]:\theta_{g,j}=0}}\hspace{-1mm} \hat\theta_{t,j} -  \sum_{\substack{k \in [K],\theta_{g,k}=1\\ \hat\mu_{t,k}/\hat\theta_{t,k} > \hat\mu_{t,i}/\hat\theta_{t,i} }} \hspace{-2mm}\hat\theta_{t,k} \right\},\\
	&\mbox{and } E_\theta = \{\forall i \in [K]: \theta_{g,i} = 1\}.
\end{align*}

Event $E_\theta $ states that each arm has a good threshold estimate, which means ONUM-DT found the allocation equivalent for $\btheta$. In round $t$, event $B_i(t)$ is defined for arm $i$ with $\theta_{g,i}=0$ and indicates whether it can get resource or not. Event $G_i(t)$ is defined for arm with $\theta_{g,i}=1$ and indicates if it can get resource. By construction, event $B_i(t)$ does not happen for arms having a good threshold estimate, and event $G_i(t)$ does not happen for arms having a bad threshold estimate. The arms having the highest empirical reward to resource ratio, i.e., $\hat\mu_j/\hat\theta_{t,i}$ gets resource first followed by second highest. The resource is first allocated among arms having a bad threshold estimate to find allocation equivalent as soon as possible. The leftover resource is allocated to arms with a good threshold estimate to increase the reward. In round $t$, the $\hat\theta_{t,i}$ for arm $i$ is updated as follows:
\begin{align}
	\label{equ:updateTheta}
	\hat\theta_{t,i} = \begin{cases}
	\hat\theta_{u,i}  										 &\mbox{if $E_\theta$ or $G_i(t)$  happens} \\
	\frac{\hat\theta_{l,i} + \hat\theta_{u,i}}{2}  &\mbox{if $B_i(t)$ happens} \\
	0 															   & \mbox{Otherwise}
	\end{cases}.
\end{align}

If $\theta_{g,i}=0$ for any arm, $\hat\theta_{t,i}$ resource is allocated to each arm $i \in [K]$ and reward $Y_{t,i}$ is observed (line $7$). If reward $1$ is observed for arm $i$ with $\hat\theta_{g,i}=0$, then the upper bound of threshold is $\hat\theta_{t,i}$, i.e, $\theta_{u,i}=\hat{\theta}_{t,i}$ (line $10$). The success and failure counts are also updated as $S_i = S_i + 1, F_i = F_i + Z_i$, and $Z_i$ is reset to $0$. If reward $0$ is observed after allocating positive resource, $Z_i$ is incremented by $1$. If $0$ reward is observed for successive $W_\delta$ rounds for arm $i$ that have bad threshold estimate then it means that $\hat\theta_{t,i}$ is an underestimate of $\theta_i$. So, lower bound of threshold to $\hat\theta_{t,i}$, i.e, $\theta_{l,i}=\hat{\theta}_{t,i}$ and $Z_i$ is reset to $0$ (line $11$). For any arm $i$ having good threshold estimate and $\hat\theta_i = \hat\theta_{u,i}$, its success and failure counts are updated as $S_i = S_i + Y_{t,i}, F_i = F_i + 1 - Y_{t,i}$ (line $14$).

Once we have good threshold estimate for all arms, we could adapt to an algorithm that works well for Combinatorial Semi-Bandits, like SDCB \cite{NIPS16_chen2016combinatorial} and CTS \cite{ICML18_wang2018thompson}. SDCB uses the UCB type index, whereas CTS uses Thompson Sampling. We adapt the CTS to our setting due to its better empirical performance. Oracle uses $KL(\hat\bmu_t, \hat\btheta, C)$ to identify the arms in the round $t$ where the learner has to allocate resource (denoted as set $A_t$ in line $18$). $\hat\theta_i$ resource is allocated to each arm $i \in A_t$ and reward $Y_{t,i}$ is observed. Then $S_i = S_i + Y_{t,i}, F_i = F_i + 1 - Y_{t,i}$ are updated (line $19$).

\subsection{Analysis of ONUM-DT}
The value of $W_\delta$ in ONUM-DT is set such that the probability of threshold estimate does not lie in $[\theta_i, \theta_i+\gamma]$ for all arms is upper bounded by $\delta$. Our next result gives an upper bound on the number of rounds required to obtain the allocation equivalent $\hat{\btheta}$ with high probability.
\begin{restatable}{lem}{MultiTheta}
    \label{lem:MultiTheta}
    Let $(\bmu, \btheta, C) \in \ONUM$ such that $\gamma>0$ and $\mu_K \geq  \epsilon>0$. Then with probability at least $1-\delta$, the number of rounds needed by \textnormal{ONUM-DT} to find an allocation equivalent of $\btheta$ is upper bounded as 
    \begin{equation*}
    	T_{\theta_d}  \le \frac{{K\log(K\log_2(\lceil 1 +{C}/{\gamma}\rceil)/\delta)}}{\log(1/(1-\epsilon))} {\log_2 (\lceil 1 + {C}/{\gamma}\rceil)}.
    \end{equation*}
\end{restatable}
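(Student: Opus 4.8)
The plan is to bound $T_{\theta_d}$ by decomposing it across the $K$ arms and, for each arm, across the rounds of its binary search. First I would observe that ONUM-DT performs an independent binary search over the interval $[0,C]$ for each arm $i$, where the search terminates once $\hat\theta_{u,i}-\hat\theta_{l,i}\le\gamma$. Starting from an interval of width $C$ and halving at each step, the number of binary-search steps required to reach width at most $\gamma$ is $\lceil \log_2(C/\gamma)\rceil$, which I would write as $\log_2(\lceil 1+C/\gamma\rceil)$ to match the bound; this accounts for one of the two factors in the statement.

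Next I would analyze the cost of a single binary-search step. At each step the algorithm must decide whether the current estimate $\hat\theta_{t,i}$ is an over- or under-estimate of $\theta_i$. An over-estimate is detected the moment a reward $1$ is observed, so the search can move its upper bound down immediately (line 10). The delicate direction is the under-estimate case: the algorithm declares $\hat\theta_{t,i}$ an under-estimate and raises the lower bound only after observing $W_\delta$ consecutive zeros (line 11). Since each arm has $\mu_i\ge\mu_K\ge\epsilon$, when the resource is sufficient a reward $1$ appears with probability at least $\epsilon$ in each round, so the probability of seeing $W_\delta$ consecutive zeros on a correctly-allocated arm is at most $(1-\epsilon)^{W_\delta}$. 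With the chosen value $W_\delta = \log(K\log_2(\lceil 1+C/\gamma\rceil)/\delta)/\log(1/(1-\epsilon))$, this misclassification probability per step is at most $\delta/(K\log_2(\lceil 1+C/\gamma\rceil))$, so that a union bound over all $K$ arms and all $\log_2(\lceil 1+C/\gamma\rceil)$ steps keeps the total failure probability below $\delta$, giving the claimed high-probability statement. Each binary-search step costs at most $W_\delta$ rounds (the worst case being the under-allocation branch), and I would note that this is exactly the structure of Lemma \ref{lem:sameThresholdEstRounds}, so the argument parallels that proof with $\log_2(K)$ replaced by the per-arm depth $\log_2(\lceil 1+C/\gamma\rceil)$.

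Putting the pieces together, for each fixed arm the search takes at most $W_\delta\log_2(\lceil 1+C/\gamma\rceil)$ rounds. Because the $K$ arms have their thresholds estimated concurrently within a single round (line 7 allocates to all arms simultaneously), I would be careful here: the bound in the statement carries a factor of $K$, which indicates the intended accounting treats the per-arm costs additively rather than in parallel, presumably to cover the worst case where resource contention forces the searches to proceed essentially one arm at a time (an arm with $\hat\theta_{t,i}=0$ in Eq. \eqref{equ:updateTheta} makes no progress that round). Summing $W_\delta\log_2(\lceil 1+C/\gamma\rceil)$ over the $K$ arms then yields
\begin{equation*}
	T_{\theta_d} \le K\, W_\delta \log_2(\lceil 1+C/\gamma\rceil) = \frac{K\log(K\log_2(\lceil 1+C/\gamma\rceil)/\delta)}{\log(1/(1-\epsilon))}\log_2(\lceil 1+C/\gamma\rceil),
\end{equation*}
which is exactly the claimed bound.

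The main obstacle I anticipate is making the $K$-arm accounting rigorous. The update rule in Eq. \eqref{equ:updateTheta} couples the arms through the shared capacity $C$: in a given round only a subset of arms receive positive resource, and an arm set to $\hat\theta_{t,i}=0$ does not advance its binary search. I would need to argue that, despite this coupling, in every round at least one under-searched arm makes progress (or, more conservatively, that the total number of \emph{productive} steps is bounded by the sum of per-arm depths and that each productive step consumes at most $W_\delta$ rounds). The cleanest way to handle this is to charge rounds to (arm, search-step) pairs and verify that no pair is charged more than $W_\delta$ rounds, which reduces the coupled dynamics to the same per-step analysis used in Lemma \ref{lem:sameThresholdEstRounds}; I would flag this bookkeeping as the part requiring the most care, while the probabilistic $W_\delta$-consecutive-zeros estimate and the union bound are routine.
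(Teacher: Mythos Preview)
Your proposal is correct and follows essentially the same approach as the paper's proof: bound the per-step misclassification probability by $(1-\epsilon)^{W_\delta}$, union-bound over the $K$ arms and the $\log_2(\lceil 1+C/\gamma\rceil)$ binary-search steps to fix $W_\delta$, and then multiply the per-arm cost $W_\delta\log_2(\lceil 1+C/\gamma\rceil)$ by $K$. Your discussion of the $K$-arm coupling through Eq.~\eqref{equ:updateTheta} is in fact more careful than the paper, which simply asserts the final $KW_\delta\log_2(|\Theta|)$ bound without addressing why concurrent searching does not reduce the factor of $K$.
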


Let $\Delta_x$ and $\Delta_m$ be defined as in Section \ref{ssec:analysisNUM_S}. Let $\gamma>0$, $S_x=\{i:x_i \ge \theta_i\}$ for any feasible allocation $a$, $K_{max} = \max_{x \in \A}|S_x|$, and $k^\star = \min_{x^\star \in \A}|S_{x^\star}|$. Note that we redefine $W_\delta = \log (K\log_2(\lceil 1 + C/\gamma\rceil)/\delta)/\log(1/(1-\epsilon))$. We need the following results to prove the regret bounds.
\begin{thm}
	\label{thm:CTSRegret}
	Let $\hat\btheta$ be allocation equivalent to $\btheta$ for instance $(\bmu, \btheta, C)$. After knowing $\hat\btheta$, the expected regret of ONUM-DT in $T$ rounds is upper bounded by $\left(\sum_{i \in [K]} \max\limits_{S_x:i\in S_x}\frac{8|S_x|\log {T}}{\Delta_x - 2(k^\star{}^2 + 2)\eta}\right) + \left(\frac{KK_{max}^2}{\eta^2} + 3K\right)\Delta_m +\alpha_1\left(\frac{8\Delta_m}{\eta^2}\left(\frac{4}{\eta^2} + 1\right)^{k^\star} \log\frac{k^\star}{\eta^2}\right)$ for any $\eta$ such that $\forall \bx \in \A$, $\Delta_x>2(k^\star{}^2+2)\eta$ and $\alpha_1$ is a problem independent constant. 
\end{thm}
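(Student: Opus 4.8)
The plan is to show that, once ONUM-DT has found the allocation equivalent $\hat\btheta$, its subsequent behaviour coincides exactly with running the Combinatorial Thompson Sampling (CTS) algorithm of \cite{ICML18_wang2018thompson} on a combinatorial semi-bandit instance, so that the CTS regret theorem (Theorem~1 of \cite{ICML18_wang2018thompson}) can be invoked essentially verbatim. By Lemma~\ref{lem:diffTheteEst}, once $\hat\btheta$ is known the instances $(\bmu,\btheta,C)$ and $(\bmu,\hat\btheta,C)$ are allocation equivalent; hence the optimal value of $KP(\bmu,\hat\btheta,C)$ equals the true optimal network utility, and the entire residual regret may be analysed in the transformed instance without any loss.

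First I would make the correspondence with the semi-bandit model explicit. Each arm $i\in[K]$ is a base arm with Bernoulli mean $\mu_i$; a superarm is the support set $S_x=\{i:x_i\ge\hat\theta_i\}$ of a feasible allocation $\bx\in\A$, equivalently a feasible packing of $KP(\bmu,\hat\btheta,C)$, and its expected reward is $\sum_{i\in S_x}\mu_i$. After the event $E_\theta$ holds, lines~18--19 of Algorithm~\ref{alg:ONUM-DT} are precisely the CTS loop: maintain a $\beta(S_i,F_i)$ posterior per arm, draw an independent sample $\hat\mu_{t,i}$, call the oracle to return a maximum-value feasible packing $A_t$, observe semi-bandit feedback $Y_{t,i}$ on $i\in A_t$, and update $S_i,F_i$. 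Thus the knapsack oracle plays the role of the exact value-maximising oracle required by CTS.

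Second I would check the two structural hypotheses of the CTS analysis. Monotonicity is immediate because $\sum_{i\in S_x}\mu_i$ is nondecreasing in each $\mu_i$. The bounded-smoothness condition holds with coefficient $1$: for any two mean vectors, $|\sum_{i\in S_x}\mu_i-\sum_{i\in S_x}\mu_i'|\le\sum_{i\in S_x}|\mu_i-\mu_i'|$, since the reward is linear in the arm means. With these verified, the hypotheses of the CTS regret theorem are met, with $k^\star=\min_{\bx^\star\in\A}|S_{x^\star}|$ serving as the smallest optimal superarm size and $K_{max}=\max_{\bx\in\A}|S_x|$ as the maximum superarm size.

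Finally I would substitute our notation into the CTS bound. Invoking Theorem~1 of \cite{ICML18_wang2018thompson} with gaps $\Delta_x$, maximum per-round regret $\Delta_m$, and any $\eta>0$ satisfying $\Delta_x>2(k^\star{}^2+2)\eta$ for all $\bx\in\A$, reproduces the three claimed terms: the $\log T$ exploration sum $\sum_{i\in[K]}\max_{S_x:i\in S_x}\frac{8|S_x|\log T}{\Delta_x-2(k^\star{}^2+2)\eta}$, the constant term $\big(KK_{max}^2/\eta^2+3K\big)\Delta_m$, and the problem-independent initialization term $\alpha_1\frac{8\Delta_m}{\eta^2}(4/\eta^2+1)^{k^\star}\log(k^\star/\eta^2)$. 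The main obstacle is the verification in the first two steps --- confirming that the family of superarms induced by $\A$ under $\hat\btheta$ is exactly the feasible family over which CTS is analysed, so that the optimal superarm is not excluded and each $\Delta_x$ coincides with the corresponding knapsack sub-optimality gap; once this identification is secured, the bound transfers directly.
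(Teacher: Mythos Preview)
Your proposal is correct and follows essentially the same route as the paper: reduce the post-$E_\theta$ phase of ONUM-DT to the CTS algorithm of \cite{ICML18_wang2018thompson}, verify the structural assumptions (the paper phrases them as Assumptions~1--3, checking in particular the $B=1$ Lipschitz bound $r(S,\bmu)-r(S,\bmu')\le\|\bmu-\bmu'\|_1$ exactly as you do), and then invoke Theorem~1 of \cite{ICML18_wang2018thompson} with the present notation. Your write-up is slightly more explicit about the oracle/superarm correspondence and the role of Lemma~\ref{lem:diffTheteEst}, but the argument is the same.
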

\noindent
Note that once the estimated $\hat{\btheta}$ is allocation equivalent to $\btheta$, the ONUM problem with the different thresholds is equivalent to solving a Combinatorial Semi-Bandits problem. The proof follows by verifying Assumptions $1-3$ of \cite{ICML18_wang2018thompson} for the Combinatorial Semi-Bandits setup and then applying their regret bounds. Assumption $1$ states that the mean reward of a superarm only depends on the mean rewards of its constituting arms, and distributions of the arms are independent (Assumptions $3$). Both these assumptions hold for our case. We next proceed to verify Assumption $2$. For a fixed allocation $\bx\in \A$, the mean reward collected from vector $\bmu$ is given by $r(S,\bmu)=\sum_{i \in S}\bmu_i$ where $S=\left\{i:x_i \ge \hat\theta_i\right\}$. For any two reward vectors $\bmu$ and $\bmu^\prime$, we have
\begin{align*}
	r&(S, \bmu)-r(S, \bmu^\prime)=\sum_{i \in S}(\mu_i - \mu_i^\prime) \\
	& = \sum_{i \in [K]}  \one{x_i \ge \hat\theta_i}\left (\mu_i -\mu_i^\prime \right) \hspace{1.5mm} \text{$\Bigg($as $\sum_{i \in S}\mu_i = \sum_{i \in [K]} \mu_i \one{x_i \ge \hat\theta_i}\Bigg)$}\\
	& \le \sum_{i \in [K]}  \left (\mu_i -\mu_i^\prime \right)\leq    \sum_{i \in [K]}   |\mu_i -\mu_i^\prime |
	= B\parallel \bmu- \bmu^\prime \parallel_1
\end{align*}
where $B=1$. After knowing the allocation equivalent, the allocation to each arm remains the same in every subsequent round ($\hat{\theta}_i$ resource is allocated to arm $i \in A_t$). By using Theorem $1$ of \cite{ICML18_wang2018thompson} with parameter $B=1$, we get the regret bounds of Theorem \ref{thm:CTSRegret}.

\begin{thm}
	\label{thm:regretDiffThresholdHighConf}
	Let $(\bmu, \btheta, C)\in \ONUM$ such that $\gamma >0$, $\mu_K\geq \epsilon>0$, and $T>T_{\theta_d}$. Then with probability at least $1-\delta$, the expected regret of ONUM-DT is upper bounded by ${\Delta_m KW_\delta\log_2 \left(\lceil 1+C/\gamma\rceil \right)}  + \left(\sum_{i \in [K]} \max\limits_{S_x:i\in S_x}\frac{8|S_x|\log {T}}{\Delta_x - 2(k^\star{}^2 + 2)\eta}\right) + \left(\frac{KK_{max}^2}{\eta^2} + 3K\right)\Delta_m +\alpha_1\left(\frac{8\Delta_m}{\eta^2}\left(\frac{4}{\eta^2} + 1\right)^{k^\star} \log\frac{k^\star}{\eta^2}\right).$
\end{thm}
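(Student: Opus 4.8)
The plan is to mirror the two-phase decomposition used in the proof of Theorem~\ref{thm:regretHighConf}, splitting the horizon into the phase in which ONUM-DT is still searching for an allocation equivalent of $\btheta$ and the phase after a good estimate $\hat\btheta$ has been locked in for every arm (i.e.\ the event $E_\theta$ holds). In the first phase the algorithm may allocate resource sub-optimally on every round, while in the second phase the per-round allocation is fixed and the interaction is exactly a Combinatorial Semi-Bandits instance, so the two phases can be bounded by entirely different tools. Writing $\EE{\Regret_T}$ as the sum of an estimation contribution and a post-estimation contribution, I would bound each separately and then add them.

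For the estimation phase, the key observation is that the number of rounds consumed before $\theta_{g,i}=1$ holds for all $i\in[K]$ is exactly $T_{\theta_d}$, and Lemma~\ref{lem:MultiTheta} guarantees that, with probability at least $1-\delta$, $T_{\theta_d}\le K W_\delta \log_2(\lceil 1 + C/\gamma\rceil)$ once we substitute the definition of $W_\delta$. Since $\Delta_m$ is by definition the maximum regret that can be incurred on any single round, the cumulative regret over this phase is at most $\Delta_m T_{\theta_d}$, which on the high-probability event is at most $\Delta_m K W_\delta \log_2(\lceil 1 + C/\gamma\rceil)$ --- exactly the first term of the claimed bound. The hypothesis $T>T_{\theta_d}$ is what makes this decomposition meaningful, since it guarantees that the estimation phase actually terminates within the horizon and leaves a non-empty second phase.

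For the post-estimation phase, I would condition on the same $1-\delta$ event of Lemma~\ref{lem:MultiTheta}, on which the locked-in $\hat\btheta$ is genuinely allocation equivalent to $\btheta$ (using $\hat\theta_i\in[\theta_i,\theta_i+\gamma]$ together with Lemma~\ref{lem:diffTheteEst}). On this event the residual interaction is precisely the Combinatorial Semi-Bandits problem analysed in Theorem~\ref{thm:CTSRegret}, so the post-estimation regret is upper bounded by the three-term expression stated there; bounding the $\log(T-T_{\theta_d})$ factor appearing in that analysis by $\log T$ introduces no difficulty and reproduces the remaining terms verbatim. Adding the two contributions and noting that both bounds hold on the same event of probability at least $1-\delta$ yields the theorem.

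The combination itself is routine bookkeeping; the genuine content has already been discharged in Lemma~\ref{lem:MultiTheta} and Theorem~\ref{thm:CTSRegret}. The only point that needs care is the probabilistic coupling: I want the correctness of the allocation equivalent (needed to invoke Theorem~\ref{thm:CTSRegret}) and the bound on $T_{\theta_d}$ (needed for the estimation-phase term) to be the \emph{same} $1-\delta$ event rather than two separate ones, so that a union bound does not degrade the confidence to $1-2\delta$. Fortunately Lemma~\ref{lem:MultiTheta} delivers both simultaneously --- the estimate is correct and its round count is bounded on a single good event --- so no union bound is incurred and the final statement holds with probability at least $1-\delta$.
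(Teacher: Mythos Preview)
Your proposal is correct and follows exactly the paper's approach: split the regret into the estimation phase bounded by $\Delta_m T_{\theta_d}$ via Lemma~\ref{lem:MultiTheta}, and the post-estimation phase bounded by Theorem~\ref{thm:CTSRegret}, with both holding on the same $1-\delta$ event. Your additional remarks about the single good event and the $\log(T-T_{\theta_d})\le\log T$ replacement are welcome clarifications but do not deviate from the paper's argument.
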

\noindent
The first term of regret bound corresponds to the regret incurred for finding the correct allocation equivalent with high probability. The number of rounds needed to find the correct allocation equivalent is $T_{\theta_d}$. As $\Delta_m$ is the maximum regret that can be incurred in any round, the maximum regret incurred for estimating allocation equivalent is upper bounded by $\Delta_m T_{\theta_d}$. The other terms correspond to the regret incurred after knowing the allocation equivalent. Once an allocation equivalent is known, the expected regret incurred is upper bounded as given in Theorem \ref{thm:CTSRegret}.  Hence the expected regret of ONUM-DT is a sum of these two regret bounds, and it holds with probability at least $(1-\delta)$.

\begin{restatable}{cor}{regretDiffThreshold}
	\label{cor:regretDiffThreshold}
	 Assume technical conditions stated in Theorem \ref{thm:regretDiffThresholdHighConf} hold. Set $\delta=1/T$ in ONUM-DT. Then the expected regret of ONUM-DT is upper bounded by
	$
	\Delta_m KW_\delta\log_2 \left(\lceil 1+C/\gamma\rceil \right) 
	+ \left(\sum_{i \in [K]} \max\limits_{S_x:i\in S_x}\frac{8|S_x|\log {T}}{\Delta_x - 2(k^\star{}^2 + 2)\eta}\right) 
	+ \left(\frac{KK_{max}^2}{\eta^2} + 3K\right)\Delta_m
	+\alpha_1\left(\frac{8\Delta_m}{\eta^2}\left(\frac{4}{\eta^2} + 1\right)^{k^\star} \log\frac{k^\star}{\eta^2}\right)
	$ 
	where $W_\delta= {\log(KT\log_2(\lceil 1 +{C}/{\gamma}\rceil))}/{\log(1/(1-\epsilon))}$. 
\end{restatable}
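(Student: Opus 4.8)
The plan is to convert the high-probability guarantee of Theorem \ref{thm:regretDiffThresholdHighConf} into an unconditional expected-regret bound by averaging over a good and a bad event, exactly mirroring the argument sketched for Corollary \ref{cor:regretSameThreshold}. Write $R(\delta)$ for the four-term right-hand side appearing in Theorem \ref{thm:regretDiffThresholdHighConf}. Let $\mathcal{E}$ denote the event that ONUM-DT identifies a correct allocation equivalent of $\btheta$ within its first $T_{\theta_d}$ rounds; by Lemma \ref{lem:MultiTheta} we have $\Prob{\mathcal{E}}\ge 1-\delta$, and on $\mathcal{E}$ the theorem supplies the conditional bound $\EE{\Regret_T \mid \mathcal{E}}\le R(\delta)$ (the identification-phase contribution being at most $\Delta_m T_{\theta_d}$ and the remainder being the CTS bound of Theorem \ref{thm:CTSRegret}).

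First I would control the regret on the complement $\mathcal{E}^c$. Since $\Delta_m=\max_{\bx\in\A}\Delta_x$ is by definition the largest sub-optimality gap incurred in any single round, the cumulative regret over $T$ rounds can never exceed $T\Delta_m$, whatever allocations the learner makes. Decomposing the unconditional expectation over $\mathcal{E}$ and $\mathcal{E}^c$ and using $\Prob{\mathcal{E}^c}\le\delta$ then gives
\begin{equation*}
  \EE{\Regret_T}=\Prob{\mathcal{E}}\,\EE{\Regret_T\mid\mathcal{E}}+\Prob{\mathcal{E}^c}\,\EE{\Regret_T\mid\mathcal{E}^c}\le R(\delta)+\delta\,T\Delta_m.
\end{equation*}

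Next I would substitute $\delta=1/T$. This turns the confidence parameter in the leading term into $W_{1/T}=\log\!\big(KT\log_2(\lceil 1+C/\gamma\rceil)\big)/\log(1/(1-\epsilon))$, using the identity $\log(K\log_2(\lceil 1+C/\gamma\rceil)/\delta)=\log(KT\log_2(\lceil 1+C/\gamma\rceil))$ at $\delta=1/T$; this is precisely the value of $W_\delta$ stated in the corollary, and the remaining CTS-type terms of $R(\delta)$ are untouched because they do not depend on $\delta$. The additive penalty from the bad event becomes $\delta\,T\Delta_m=\Delta_m$, a single problem-dependent constant, which can be folded into the existing $\big(KK_{max}^2/\eta^2+3K\big)\Delta_m$ term at the cost only of a marginally larger constant. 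Collecting terms yields exactly the claimed bound.

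I expect the only point requiring care to be the legitimacy of the unconditioning rather than any hard estimate: one must note that the quantity bounded in Theorem \ref{thm:regretDiffThresholdHighConf} is a conditional expectation given correct identification, that the crude bound $T\Delta_m$ dominates the regret on $\mathcal{E}^c$ deterministically, and that $\Delta_m$ is a fixed instance constant independent of $T$ so that the $\delta\,T\Delta_m$ term genuinely collapses to a constant under $\delta=1/T$. Once these observations are in place, the event decomposition and the substitution are routine, and the only remaining verification is the elementary algebraic identity for $W_{1/T}$ recorded above.
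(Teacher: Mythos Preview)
Your proposal is correct and follows essentially the same route as the paper: the paper's own justification is the one-line remark that the bound ``follows from Theorem~\ref{thm:regretDiffThresholdHighConf} with $\delta=1/T$ and unconditioning the expected regret obtained in Theorem~\ref{thm:CTSRegret},'' and your good/bad-event decomposition with the trivial bound $T\Delta_m$ on $\mathcal{E}^c$ is exactly the natural way to carry out that unconditioning. Your observation that the residual $\delta T\Delta_m=\Delta_m$ must be absorbed into the constant term is a detail the paper leaves implicit.
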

\noindent
The above bound follows from Theorem \ref{thm:regretDiffThresholdHighConf} with $\delta=1/T$ and unconditioning the expected regret obtained in Theorem \ref{thm:CTSRegret}.

	\section{Experiments}
	\label{sec:experiments}

We evaluate the performance of ONUM-ST and ONUM-DT empirically on three synthetically generated instances. In instance $1$, the threshold is the same for all arms, whereas, in instances $2$ and $3$, thresholds vary across arms. We ran the algorithm for $T=10000$ rounds in all the simulations. All the experiments are repeated $50$ times, and the regret curves are shown with a $95\%$ confidence interval. The vertical line on each curve shows the confidence interval. The following empirical results validate sub-linear bounds for our algorithms. The details about the problem instances are as follows:
~\\
\textbf{Instance $1$ (Identical Threshold):} It has $K = 50, C=20,$ $\theta_s=0.7, \delta=0.1$ and $\epsilon=0.1$.  The mean reward of arm $i\in [K]$ is $0.25 + (i-1)/100$. 
~\\
\textbf{Instance $2$ (Different Thresholds):} It has $K = 5,C=2,$ $\delta=0.1,$ $\epsilon=0.1$ and $\gamma=10^{-3}$. The mean reward vector is $\bmu = [0.9,$ $0.89,0.87,0.6,0.3]$ and the corresponding  threshold vector is $\btheta=[0.7,0.7,0.7,0.6,0.35]$. 
~\\
\textbf{Instance $3$ (Different Thresholds):} It has $K=10, C=3,$ $\delta=0.1$, $\epsilon=0.1$ and $\gamma=10^{-3}$. The mean reward vector is $\bmu \hspace{-0.2mm}=\hspace{-0.2mm} [0.9, 0.8, 0.42, 0.6, 0.5, 0.2, 0.11, 0.7, 0.3, 0.98]$ and the corresponding threshold vector is $\btheta = [0.6, 0.55, 0.3,$ $ 0.46, 0.34, 0.2, 0.07, 0.3, 0.25, 0.8]$. 

We considered two different reward distributions of arms: 1) Bernoulli, where the rewards of arm $i$ are Bernoulli distributed with parameter $\mu_i$, and 2) Uniform, where the rewards of arm $i$ is uniformly distributed in the interval $[\mu_i - 0.1, \mu_i+0.1]$. For any continuous reward distribution with support in $(0,1]$, the value of $W_\delta$ is set to $1$ because the reward is observed with probability $1$ when the allocated resource is above its threshold on any arm. For the Bernoulli distribution, the value of $W_\delta$ is $38$ for instance $1$, $62$ for instance $2$ and $69$ for instance $3$. Hence, we observe less regret for uniformly distributed rewards than Bernoulli distributed rewards. This difference is more significant when the arms have different thresholds.

\textbf{Experiments with the same threshold:} We perform two different experiments on problem instance $1$ using ONUM-ST. First, we varied the amount of resource $C$ while keeping other parameters unchanged. With more resource, the learner can allocate resource to more arms. Hence learner can observe rewards from more arms in each round, which leads to faster learning and low cumulative regret, as shown in Fig. \eqref{fig:UniSameThetaC} for the uniformly distributed rewards. For the uniform distribution we use binarization trick \cite{COLT12_agrawal2012analysis} to apply ONUM-ST: when a real-valued reward $Y_{t,i}\in (0,1]$ is observed, the algorithm is updated with a fake binary reward that is drawn from Bernoulli distribution with parameter $Y_{t,i}$, i.e., $Y_{t,i}^f \sim Ber(Y_{t,i}) \in \{0,1\}$. The different amount of resource has different optimal allocation and sub-optimality gap. Hence with large $W_\delta$ value for Bernoulli distributed rewards, we may not observe similar behavior (less regret with more resource)  as shown in Fig. \eqref{fig:BerSameThetaC}. 
\begin{figure}[!ht]
	\centering
	\begin{subfigure}[b]{0.24\textwidth}
		\includegraphics[width=\linewidth]{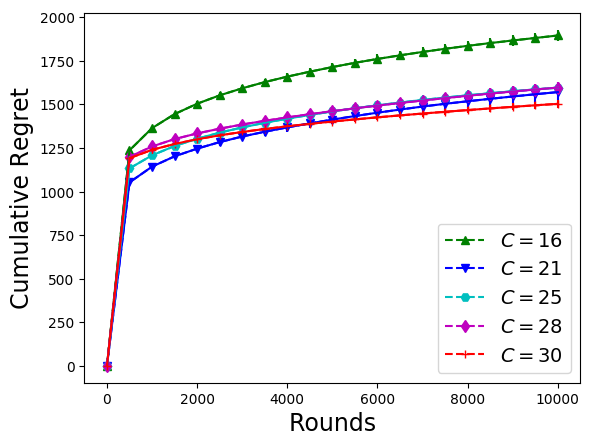}
		\caption{\footnotesize Bernoulli Distributed Reward}
		\label{fig:BerSameThetaC}
	\end{subfigure}
	\begin{subfigure}[b]{0.24\textwidth}
		\includegraphics[width=\linewidth]{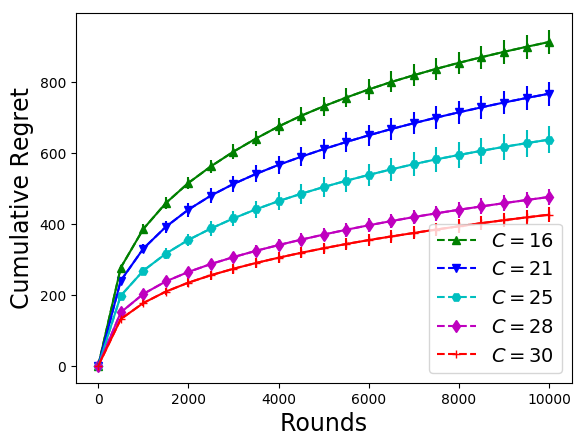}
		\caption{\footnotesize Uniform Distributed Reward}
		\label{fig:UniSameThetaC}	
	\end{subfigure}
	\caption{Regret of ONUM-ST.}
	\label{fig:SameThetaC}
\end{figure}

Second, we varied the threshold $\theta_s$ while keeping other parameters unchanged. As a smaller threshold allows the allocation of resource to more arms, we observe that a smaller threshold leads to faster learning due to more feedback. These trends are shown in Fig. \eqref{fig:BerSameThetaT} and \eqref{fig:UniSameThetaT} for Bernoulli and uniformly distributed rewards, respectively.
\begin{figure}[!ht]
	\begin{subfigure}[b]{0.24\textwidth}
		\includegraphics[width=\linewidth]{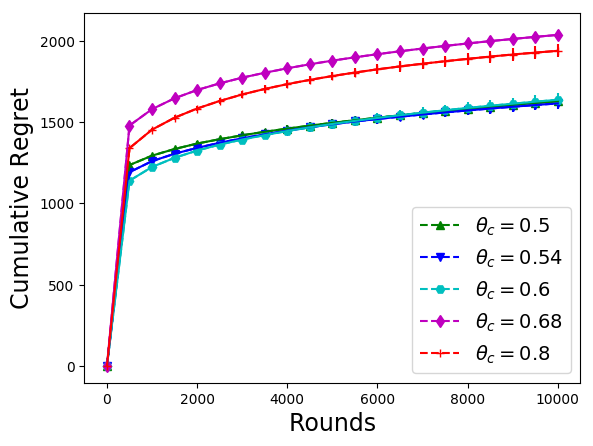}
		\caption{\footnotesize Bernoulli Distributed Reward}
		\label{fig:BerSameThetaT}
	\end{subfigure}
	\begin{subfigure}[b]{0.24\textwidth}
		\includegraphics[width=\linewidth]{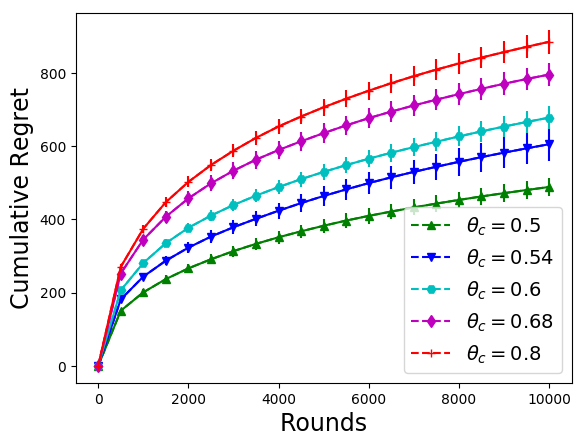}
		\caption{\footnotesize Uniform Distributed Reward}
		\label{fig:UniSameThetaT}
	\end{subfigure}
	\caption{Regret of ONUM-ST.}
	\label{fig:SameThetaQ}
\end{figure}

\textbf{Experiments with different thresholds:} 
We evaluate the performance of ONUM-DT on problem instances $2$ and $3$. We varied the amount of resource $C$ while keeping other parameters unchanged. As the thresholds are different across arms, an increase in the resource may lead to a selection of a different set of arms leading to different sub-optimality gaps. Hence, it does not show the same behavior (less regret with more resource) as observed for the same threshold. 
\begin{figure}[!ht]
	\centering
	\begin{subfigure}[b]{0.24\textwidth}
		\includegraphics[width=\linewidth]{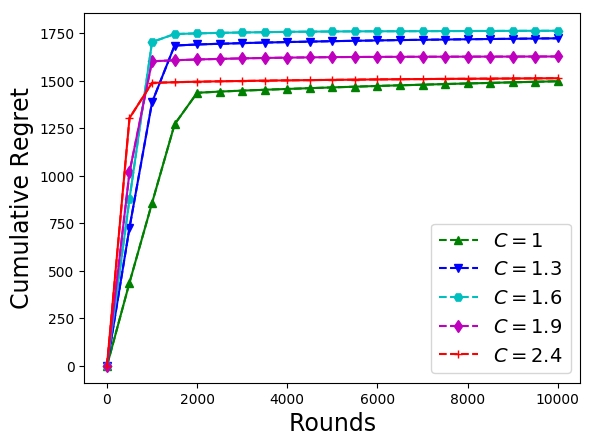}
		\caption{\footnotesize Bernoulli Distributed Reward}
		\label{fig:BerDiffTheta2}
	\end{subfigure}
	\begin{subfigure}[b]{0.24\textwidth}
		\includegraphics[width=\linewidth]{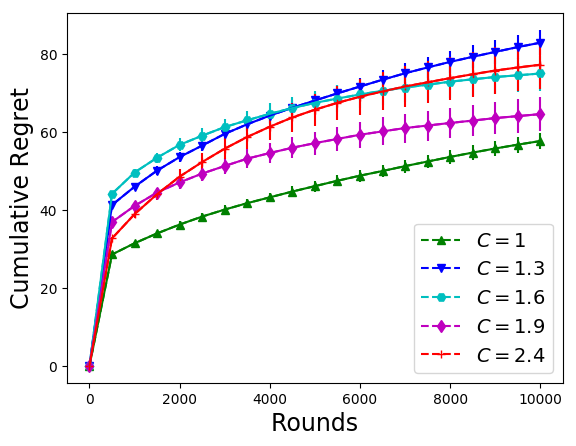}
		\caption{\footnotesize Uniform Distributed Reward}
		\label{fig:UniDiffTheta2}
	\end{subfigure}
	\caption{Regret of ONUM-DT.}
	\label{fig:DiffTheta1}
\end{figure}
\noindent
But we observe that the allocation equivalent is learned faster as the reward of more arms can be observed simultaneously with more resource. These observations are shown in Figs. \eqref{fig:BerDiffTheta2} and \eqref{fig:UniDiffTheta2} generated on instance $2$ for Bernoulli and uniformly distributed rewards on instance $2$, and same is repeated in Figs. \eqref{fig:BerDiffTheta3} and \eqref{fig:UniDiffTheta3} on instance $3$. We run experiment $200$ times for uniformly distributed rewards (Figs. \eqref{fig:UniDiffTheta2} and \eqref{fig:UniDiffTheta3}) on instance $2$ and $3$ as confidence intervals overlapped for $50$ runs.

\begin{figure}[!ht]
	\begin{subfigure}[b]{0.24\textwidth}
		\includegraphics[width=\linewidth]{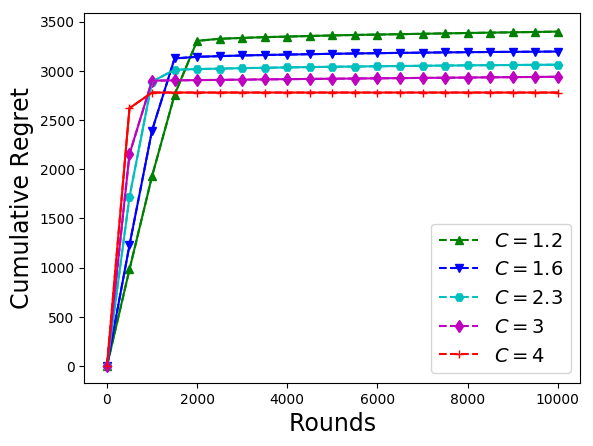}
		\caption{\footnotesize Bernoulli Distributed Reward}
		\label{fig:BerDiffTheta3}
	\end{subfigure}	
	\begin{subfigure}[b]{0.24\textwidth}
		\includegraphics[width=\linewidth]{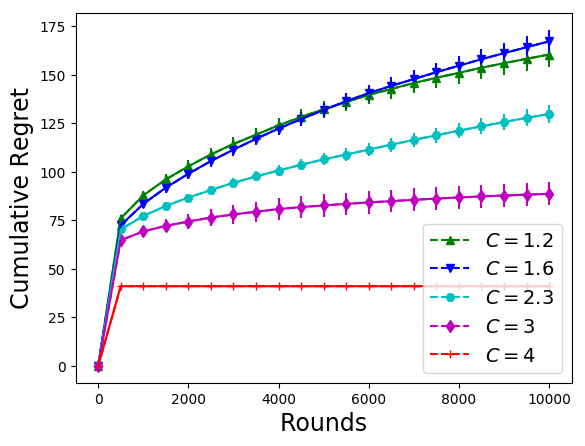}
		\caption{\footnotesize Uniform Distributed Reward}
		\label{fig:UniDiffTheta3}
	\end{subfigure}
	\caption{Regret of ONUM-DT.}
	\label{fig:DiffTheta2}
\end{figure}

	\section{Conclusion and Future Extensions}
	\label{sec:conclusion}

We proposed a novel framework for Online Network Utility Maximization (ONUM) with unknown utilities. We focused on threshold type utilities where each agent gets non-zero utility only when its allocated resource is higher than some threshold. The goal is to assign resource among agents such that the total expected utility is maximized. We considered two variants of the problem depending on whether thresholds are identical across the arms (symmetric) or not (asymmetric). Using the concept of `allocation equivalent,' and its connection to Multiple-Play Multi-Armed Bandits, we developed an optimal algorithm named ONUM-ST for the symmetric case. For the asymmetric case, we established that it is connected to a more general Combinatorial Semi-Bandits setup and developed an algorithm named ONUM-DT. Both algorithms achieve logarithm regret.

In our work, we assumed that a lower bound of the mean utilities is known, and it is also required knowledge of horizon $T$ to achieve logarithms regret. It would be interesting to see if logarithm regret can be achieved without such assumptions.

	\appendix
	\label{sec:appendix}

	\noindent
	\textbf{Proof of Lemma \ref{lem:sameThresholdEstRounds}.}
	The proof is adapted from Lemma $2$ of \cite{NeurIPS19_verma2019censored} by allowing $\theta_s \in [0,C]$. Note that when $\hat{\theta}_s > \theta_s$, it can happen that no reward is observed for consecutive $W_\delta$ rounds and leads to incorrect estimation of $\theta_s$. We want to set $W_\delta$ such a way that the probability of occurring of such event is upper bounded by $\delta$.
	
	Let $E_{\hat\theta_s}$ be the event that no reward is observed on $C/\hat{\theta}_s$ arms for $W_\delta$ consecutive rounds when $\hat{\theta}_s>\theta_s$. As $(1 - \mu_i)$ is the probability of not observing reward at arm $i$, the probability of the event $E_{\hat\theta_s}$ is bounded as follows:
	\begin{align*}
		&\Prob{\mbox{$E_{\hat\theta_s}$ occurs $| \hat\theta_s$ $1$st used at $T_{\hat\theta_s}$}} = \prod_{w=T_{\hat\theta_s}}^{T_{\hat\theta_s} + W_\delta-1} \hspace{-2mm} \prod_{i \in A_{w}} (1 - \mu_i)
	\end{align*}
	As rewards are i.i.d., $\mu_K \geq  \epsilon>0$ and $\hat\theta_s \in [0,C]$, we have
	\vspace{-1mm}
	\begin{align*}
		&\qquad \le  \prod_{w=T_{\hat\theta_s}+1}^{T_{\hat\theta_s} + W_\delta} (1 - \epsilon)^{\frac{C}{\hat\theta_s}} = (1 - \epsilon)^{\frac{CW_\delta}{\hat\theta_s}} \le (1 - \epsilon)^{W_\delta}
	\end{align*}
	Since we are doing binary search, the algorithm goes through at most $\log_2(K)$ overestimates of $\theta_s$.
	\vspace{-1mm}
	\begin{align*}
		&\Prob{\mbox{$E_{\hat\theta_s}$ for any overestimated $\hat\theta_s$}} \le (1 - \epsilon)^{W_\delta}\log_2(K)
	\end{align*}
	We bound the probability of making mistake by $\delta$ and get,
	\vspace{-1mm}
	\begin{equation*}
		(1 - \epsilon)^{W_\delta}\log_2(K) \le \delta \implies (1 - \epsilon)^{W_\delta} \le \delta/\log_2(K)
	\end{equation*}
	Taking log both side, we have
	\vspace{-1mm}
	\begin{align*}
		&W_\delta\log(1 - \epsilon) \le \log(\delta/\log_2(K))\\ 
		&\implies W_\delta \ge \frac{\log(\log_2(K)/\delta)}{\log\left({1}/{(1 - \epsilon)}\right) }
	\end{align*}
	$W_\delta$ is set as above
	so that ONUM-ST finds correct allocation equivalent with probability at least $1-\delta$ in $W_\delta\log_2(K)$ rounds. 

	\noindent
	\textbf{Proof of Lemma \ref{lem:MultiTheta}.}
	The proof is adapted from Lemma $4$ of \cite{NeurIPS19_verma2019censored} by allowing $\theta_i \in [0,C]$.
	For any arm $i \in [K]$, we want $\hat{\theta}_i \in [\theta_i, \theta_i + \gamma]$ so we divide interval $[0,C]$ into a discrete set $\Theta \doteq \left\{0, \gamma, 2\gamma, \ldots, C\right\}$ and note that $|\Theta| = \left\lceil1+ {C}/{\gamma}\right\rceil$.
	
	Let $E_{\hat\theta_i}$ be the event that no reward is observed for consecutive $W_\delta$ rounds when $\hat{\theta}_i$ is overestimated.	As $(1 - \mu_i)$ is the probability of not observing reward for arm $i$ and $\mu_K\ge \epsilon$, the probability of happening $E_{\hat\theta_i}$ is bounded by $\delta$ as follows:
	\vspace{-1mm}
	\begin{equation*}
		\Prob{E_{\hat\theta_i} \mbox{ happens}} = (1 - \mu_i)^{W_\delta} \le (1 - \epsilon)^{W_\delta}
	\end{equation*}
	Since we are doing binary search, the algorithm goes through at most $\log_2(|\Theta|)$ overestimates of $\theta_i$. 
	\vspace{-1mm}
	\begin{equation*}
		\Prob{E_{\hat\theta_i} \mbox{ happens for any overestimate}} \le (1 - \epsilon)^{W_\delta}\log_2(|\Theta|)
	\end{equation*}
	Next, we will bound the probability of making mistake for any of the arm. That is given by
	\vspace{-1mm}
	\begin{align*}
		&\Prob{\exists i  \in [K], E_{\hat\theta_i} \mbox{ happens for any overestimate}} \\
		&\qquad \le \sum_{i=1}^{K}\Prob{E_{\hat\theta_i} \mbox{ happens for any overestimate}}  \\
		&\qquad \le K(1 - \epsilon)^{W_\delta}\log_2(|\Theta|)
	\end{align*}
	We bound the probability of making mistake by $\delta$ and get,
	\vspace{-1mm}
	\begin{equation*}
	K (1 - \epsilon)^{W_\delta}\log_2(|\Theta|) \le \delta \implies (1 - \epsilon)^{W_\delta} \le \delta/K \log_2(|\Theta|)
	\end{equation*}
	Taking log both side, we have
	\vspace{-1mm}
	\begin{align*}
		&W_\delta\log(1 - \epsilon) \le \log(\delta/K \log_2(|\Theta|))\\
		&\implies W_\delta \ge \frac{\log(K \log_2(|\Theta|)/\delta)}{\log\left({1}/{(1 - \epsilon)}\right) }
	\end{align*}
	We set $W_\delta = {\log(K \log_2(|\Theta|)/\delta)}/{\log\left({1}/{(1 - \epsilon)}\right) }$.
	Therefore, the minimum rounds needed for each arm $i$ to correctly find $\hat{\theta}_i$ with probability at least $1-\delta/K$ is upper bounded by $W_\delta\log_2(|\Theta|)$. Using union bound, all $\hat\theta_i \in [\theta_i, \theta_i+\gamma]$ are correctly estimated with probability at least $1-\delta$ in $KW_\delta\log_2(|\Theta|)$ rounds where $|\Theta| = \left\lceil1+ {C}/{\gamma}\right\rceil$.

	\section*{Acknowledgments}
	Manjesh K. Hanawal would like to thank the support from SEED grant (16IRCCSG010) from IIT Bombay, INSPIRE faculty fellowships from DST and Early Career Research (ECR) Award from SERB, Government of India.

	\bibliographystyle{IEEEtran}
	\bibliography{ref}

\end{document}